\newcommand{\lmo}{\text{lmo}}
\newcommand{\norm}[1]{{\left \| #1 \right\|}}
\newcommand{\R}{\mathbb{R}}
\newcommand{\el}{\mathcal{L}}
\newcommand{\eqdef}{\stackrel{\text{def}}=}
\DeclareMathOperator*{\argmax}{argmax}
\definecolor{bgcolor2}{rgb}{0.8,1,0.8}
\begin{document}
\begin{mainpart}

\section{Introduction}

Optimization lies at the heart of modern machine learning \citep{bottou2010large, goodfellow2016deep, team2025kimi}, including today’s most prominent systems such as Large Language Models \citep{vaswani2017attention, brown2020language, hernandez2025apertus} and generative AI \citep{goodfellow2014generative, rombach2022high}. 
Improvements in optimization efficiency or stability directly translate into faster training, reduced computational costs, and ultimately better-performing models \citep{kingma2014adam}. Training a model amounts to adjusting its parameters $W$ to minimize a loss function $\mathcal{L}(W)$, which measures the discrepancy between predictions and data. While this formulation has long been fundamental to learning theory and practice, deep learning introduces new challenges: datasets are very large \citep{dean2012large, tian2025survey}, parameter spaces are extremely high-dimensional \citep{team2025kimi, hernandez2025apertus}, and the loss landscapes are highly non-convex \citep{choromanska2015loss, chen2025understanding}.

As computing the full gradient is computationally prohibitive, the standard approach to these challenges is \emph{stochastic optimization} \citep{robbins1951stochastic, rumelhart1986learning, tian2023recent}. Instead of computing the full loss, training samples $\xi \sim \mathcal{D}$ are drawn from the underlying data distribution, and the expected loss $$\mathcal{L}(W) = \mathbb{E}_{\xi \sim \mathcal{D}} \mathcal{L}(W;\xi)$$
is minimized using stochastic gradients $G_t = \nabla \mathcal{L}(W_t; \xi_t)$ computed on samples or minibatches \citep{bottou2010large, shalev2014understanding, ward2022stochastic}.  
The model parameters are then updated iteratively \citep{rumelhart1986learning, bernstein2024old}:
\begin{equation}
    \label{eq:W_t+1}
    W_{t+1} = W_t - \alpha_t \Delta W_t,
\end{equation}
where $\alpha_t$ is the learning rate and the update rule $\Delta W_t$ depends on the chosen optimization method. For example, Stochastic Gradient Descent (SGD) corresponds to $\Delta W_t = G_t$ \citep{rumelhart1986learning}. This general template encompasses most practical algorithms, ranging from Muon \citep{jordan2024muon} to quasi-Newton methods \citep{gupta2018shampoo} and adaptive optimizers \citep{kingma2014adam}.

Traditional optimization approaches typically vectorize all parameters, treating them as elements in the space $\mathbb{R}^{mn}$ \cite{rumelhart1986learning}. Recent work emphasizes that in practice, parameters $W$ possess natural \emph{matrix structure}: $W \in \mathbb{R}^{m \times n}$ (e.g., the weights of linear or convolutional layers), which can be exploited to achieve faster and more robust convergence \citep{gupta2018shampoo, goldfarb2020practical, bernstein2024modular, jordan2024muon, vyas2024soap, pethick2025training, riabinin2025gluon}.

A useful perspective on update rules is the classical \emph{steepest descent} interpretation \citep{bernstein2024old, pethick2025training}: the update direction corresponds to the direction of steepest loss decrease under a chosen norm, where the choice of norm defines the optimization geometry. For vector-valued parameters, $\ell_2$ steepest descent recovers normalized SGD \citep{hazan2015beyond}, while $\ell_\infty$ steepest descent yields SignSGD \citep{bernstein2018signsgd}. Recently, for matrix-valued parameters, Muon-like algorithms have employed the spectral (singular value) matrix norm \citep{jordan2024muon, pethick2025training, riabinin2025gluon}, which explicitly leverages the structural properties of weight matrices. The steepest descent viewpoint therefore unifies many modern optimizers. However, these algorithms do not incorporate second-order curvature information, which naturally motivates quasi-Newton-based techniques \citep{liu1989limited, gupta2018shampoo}.

Quasi-Newton methods approximate curvature by introducing a positive definite matrix $H_t \succeq 0$ that estimates the Hessian $\nabla^2 \mathcal{L}(W_t)$ \citep{goldfarb2020practical, gao2024gradient}. In the vector case, they produce preconditioned updates of the form $\Delta W_t = H_t^{-1} G_t$
\citep{davidon1959variable,broyden1970convergence, fletcher1970new, goldfarb1970family, shanno1970conditioning, broyden1967quasi, liu1989limited}.
Beyond faster convergence, quasi-Newton methods are popular due to their natural geometric properties—they preserve the sequence of iterates regardless of function scaling or choice of basis (the so-called \emph{affine invariance} \citep{nesterov1994interior, nesterov2018lectures, d2018optimal}), facilitating implementation and hyperparameter tuning. 
In the matrix case, quasi-Newton updates naturally generalize to:
\begin{equation}
\label{eq:quasi_newton_intro}
    \Delta W_t = (H_t^{L})^{-1} \cdot G_t \cdot (H_t^{R})^{-1},
\end{equation}
where $H_t^L$ and $H_t^R$ approximate left and right curvature factors, and their Kronecker product acts as a surrogate Hessian $H_t$. This principle underlies widely used methods such as K-FAC \citep{martens2015optimizing}, Shampoo \citep{gupta2018shampoo}, and SOAP \citep{vyas2024soap}, which exploit layer-wise gradient covariances for computational scalability. 

Another important approach to Hessian estimation involves element-wise preconditioners such as AdaGrad \citep{duchi2011adaptive}, RMSProp \citep{tieleman2012rmsprop}, and Adam \citep{kingma2014adam}, which, in the vector case, can be interpreted as quasi-Newton updates with diagonal $H_t$ and possess the property of \emph{scale invariance} \citep{abdukhakimov2023sania, choudhury2024remove}. By contrast, adaptive methods in the matrix domain use the Hadamard product, yielding updates of the form
\begin{equation}
\label{eq:adaptive_intro}
    \Delta W_t = V_t^{\circ-1} \odot G_t,
\end{equation}
where $V_t$ is a matrix with positive elements (scaling factors), $\odot$ denotes the Hadamard product, and $V_t^{\circ -1}$ denotes the element-wise inverse. These are also referred to as \emph{adaptive methods}, and due to their element-wise nature, they enjoy scale invariance rather than affine invariance.

Together, quasi-Newton and element-wise preconditioned methods constitute the two main paradigms for incorporating curvature information into deep learning optimization. While these approaches can achieve important geometric properties such as affine or scale invariance, they remain fundamentally constrained to the Frobenius norm, limiting their ability to capture more complex geometric structures. In contrast, steepest descent methods offer flexibility in norm selection but inherently lack these crucial invariance properties, thereby restricting their effectiveness when optimizing across the diverse structural landscapes of modern deep learning architectures. These limitations motivate the central question of our work:
\begin{center}
    \emph{Can optimization algorithms inherit both the geometric adaptability of steepest descent \newline and the curvature-awareness of quasi-Newton and adaptive approaches?}
\end{center}
Our work provides a positive answer to this question, with the following main contributions:
\begin{enumerate}
    \item \textbf{Unification of optimization methods through generalized norms.} We develop a comprehensive framework based on generalized norms (Definitions~\ref{def:norm_L_R} and \ref{def:norm_D}) that reveals fundamental connections between seemingly disparate optimization approaches. Our framework demonstrates that classical steepest descent, quasi-Newton methods, and adaptive element-wise algorithms are special cases of the same underlying principle, while also providing a principled interpretation of recently proposed methods like SOAP \citep{vyas2024soap} and SPlus \citep{frans2025stable}.

    \item \textbf{Systematic optimizer design methodology.} We derive the steepest descent update for arbitrary generalized norms (Theorem \ref{thm:lmo_general}), establishing a principled framework for designing new optimizers. Our approach enables the creation of novel algorithms by combining different curvature approximations with various descent geometries, as demonstrated with our newly proposed optimizer (Algorithm \ref{alg:muadam}).

    \item \textbf{First systematic analysis of invariance properties in matrix-parametrized optimization.} We provide the first comprehensive study of affine and scale invariance in the matrix-parametrized setting, deriving explicit necessary and sufficient conditions within our framework (\Cref{theorem:affine_invariance}).

    \item \textbf{Empirical validation of proposed methods and invariance properties.} We demonstrate the practical effectiveness of our framework through extensive experiments, showcasing the performance advantages of our proposed optimizers and empirically verifying their theoretical invariance properties (Section \ref{ssec:experiments}).
\end{enumerate}

The remainder of the paper is structured as follows. In the next section, we define technical notation and discuss related work. In \Cref{ssec:framework}, we present our unifying framework, where we introduce generalized norms and their connections to the literature, and derive the corresponding update steps. In \Cref{ssec:geometric}, we establish necessary and sufficient conditions for affine and scale invariance in the matrix-parametrized case. Finally, in \Cref{ssec:experiments}, we present our numerical evaluation.
\subsection{Preliminaries}\label{ssec:preliminaries}

The steepest descent principle provides one of the most classical formulations of first-order optimization \citep{bernstein2024modular, bernstein2024old}. In this framework, each update direction is defined as the solution of a norm-constrained quadratic model of the loss. More recently, this perspective has been revisited in deep learning, where the update step is written in terms of a \emph{Linear Minimization Oracle} (LMO) \citep{pethick2025training}. Concretely, given the matrix $G_t \in \R^{m\times n}$, the LMO is defined as
\begin{equation}
\label{eq:lmo}
    \lmo(G_t) \in \argmax_{T\in \R^{m\times n}:\|T\|\le \rho} \langle G_t, T \rangle ,
\end{equation}
where $\norm{\cdot}$ is a matrix norm, $\rho>0$ is a scaling parameter, and $\langle \cdot,\cdot\rangle$ denotes the Frobenius inner product. The corresponding steepest descent update then takes the form
\begin{equation}
\label{eq:lmo_step}
    \Delta W_t = \lmo(G_t) .
\end{equation}

A general and widely studied family is given by the $\alpha \to \beta$ operator norms:
\[
   \| G \|_{\alpha\to\beta} = \sup_{\|x\|_\alpha \le 1} \| Gx \|_\beta ,
\]
where $\norm{\cdot}_\alpha$ and $\norm{\cdot}_\beta$ are vector norms. 
A first important case  uses the root‑mean‑square norm, defined for 
$x \in \mathbb{R}^d$ as 
$
   \|x\|_{\mathrm{RMS}} := \text{dim}(x)^{-\frac 12}  \|x\|_2 .
$
When $\alpha = \beta = \text{RMS}$, 
the resulting operator norm coincides with the spectral norm and the corresponding steepest descent method is known as Muon \citep{jordan2024muon}. 
Here, the core step is the spectral LMO, which  for gradients with 
singular value decomposition $G_t = U_t \Sigma_t V_t^\top$ returns $U_t V_t^\top$.  
Muon avoids computing the explicit SVD decomposition via Newton–Schulz iteration \citep{bernstein2024old}, 
an efficient procedure based only on matrix multiplications, 
to approximate this projection onto the spectral geometry.
Beyond the spectral case, other choices of $\alpha$ and $\beta$ yield different but related 
update rules. Two further notable instances are column‑normalized and row‑normalized steps using the $1 \to \mathrm{RMS}$ and  $\mathrm{RMS} \to \infty$ norms \citep{pethick2025training}.

These constructions illustrate how steepest descent can naturally incorporate 
matrix geometry, but they remain fundamentally tied to first‑order information, 
as no curvature or higher‑order structure of the loss $\mathcal{L}$ is explicitly 
taken into account. This limitation motivates the quasi‑Newton and adaptive approaches 
discussed in the next section.

\subsection{Quasi-Newton and Adaptive Methods}

In the matrix setting, quasi-Newton updates are usually expressed through two sided preconditioning,
\begin{equation}
\label{eq:qn_matrix}
    \Delta W_t = (L_t^\top L_t)^{-1}  G_t  (R_t^\top R_t)^{-1},
\end{equation}
where, in contrast to \eqref{eq:quasi_newton_intro}, we use matrices $L_t$ and $R_t$, chosen such that 
$L_t^\top L_t := H^L_t$ and $R_t^\top R_t := H^R_t$. 
This parametrization is purely for convenience (see Definition \ref{def:norm_L_R} and Theorem \ref{thm:lmo_general}).

Several choices have been studied in the literature:
\begin{align}
    &H_t^L = \sum_{s=0}^{t-1} G_s G_s^\top,\qquad\qquad\qquad~~
    H_t^R = \sum_{s=0}^{t-1} G_s^\top G_s, 
    \tag{Shampoo \citep{gupta2018shampoo}} \\
    &H_t^L = (1 - \beta) H_{t-1}^L + \beta G_s G_s^\top,~~~
    H_t^R = (1 - \beta) H_{t-1}^R + \beta G_s^\top G_s, 
    \tag{SOAP \citep{vyas2024soap}}
\end{align}
with $\beta \in (0,1)$ controlling exponential averaging. These constructions approximately factorize the Hessian via the Kronecker structure $H_t = H_t^R \otimes H_t^L$ and are representative of the widely used \emph{Kronecker-factored preconditioning} family \citep{martens2015optimizing,zhang2025concurrence}.

Beyond such Kronecker-based methods, adaptive optimizers construct element-wise diagonal preconditioners. A general update can be written as
\begin{equation}
\label{eq:adaptive_matrix}
    \Delta W_t = (D_t \odot D_t)^{\circ -1} \odot G_t ,
\end{equation}
where $D_t \in \R^{m\times n}$ stores positive coordinate-wise statistics, $\circ-1$ denotes element-wise 
inversion, and $\odot$ is the Hadamard product. 
Analogously to the quasi-Newton case
here we use $D_t \odot D_t := V_t$ instead of $V_t$ as in \eqref{eq:adaptive_intro} for convenience. Notable examples of adaptive preconditioners include:
\begin{align}
    &V_t = \left(\sum_{s=0}^{t-1} G_s \odot G_s \right)^{\circ \frac 12}, 
    \tag{AdaGrad \citep{duchi2011adaptive}} \\
    &V_t = \left(\beta V_{t-1} + (1-\beta) (G_t \odot G_t) \right)^{\circ \frac 12} .
    \tag{Adam \citep{kingma2014adam}}
\end{align}

Thus, while quasi-Newton methods exploit Kronecker-factored approximations of curvature through $H_t^L, H_t^R$, adaptive methods rely on element-wise preconditioners $V_t$ derived from gradient magnitudes. 
These formulations illustrate the range of preconditioning strategies used in deep learning, and serve as key reference points for the unified framework proposed in this work.

\paragraph{Relation to prior work on spectral norm optimization.} While quasi-Newton and adaptive methods have predominantly been developed within the Frobenius norm framework, several works have explored integrating preconditioning with alternative geometries. \citet{carlson2015stochastic, carlson2015preconditioned, carlson2015stochasticgraphical} demonstrated the advantages of steepest descent methods under the $\ell_\infty$ and spectral (Schatten-$\infty$) norms by introducing stochastic spectral descent method. \citet{carlson2015preconditioned} specifically focused on adaptive optimization for neural networks, proposing RMSspectral and ADAspectral—adaptive methods that rescale gradients within a spectral geometry.
More recently, \citet{bernstein2024old} renewed interest in spectral descent by interpreting the popular Shampoo optimizer~\citep{gupta2018shampoo} as a momentum-based variant of stochastic spectral descent.
Our work generalizes this line of research by incorporating quasi-Newton curvature approximations through the $(L,R)$-norm (\Cref{def:norm_L_R}) and by interpreting their adaptive spectral methods of \citet{carlson2015preconditioned} as special cases of the $D$-norm formulation (\Cref{def:norm_D}). We strengthen this geometric interpretation by presenting the first systematic analysis of affine and scale invariance in matrix-parameterized optimization (\Cref{theorem:affine_invariance}). This enables principled algorithm design, which we leverage to introduce MuAdam-SANIA (\Cref{alg:muadam}), a scale-invariant optimization method that, similarly to Muon \citep{jordan2024muon},
% with two key advantages: (i) it operates directly using the Linear Minimization Oracle (LMO) formulation, eliminating the need for dual norm computations required in the steepest descent framework; and (ii) it
avoids expensive randomized SVD computations by employing an efficient approximation based on the Newton–Schulz iteration of \citet{bernstein2024old}.
% Collectively, these contributions position our work as a natural progression of preconditioned spectral optimization—addressing core limitations of previous formulations while opening new research directions at the intersection of geometric optimization and quasi-Newton methods.

From a numerical standpoint, while \citet{carlson2015preconditioned} evaluated their methods primarily on vision tasks such as MNIST and CIFAR, we demonstrate the effectiveness of our approach in large-scale modern applications, including LLM fine-tuning and comprehensive NLP benchmarks (\Cref{ssec:experiments}). 
Finally, \citet{crawshaw2025exploration} complements this landscape by systematically evaluating admissible geometries (i.e., combinations of layerwise norms and product geometries) and introducing Momo-based model truncation for step-size adaptation.

\paragraph{Computation of \( UV^\top \).}
The problem of computing the product \( UV^\top \) from the SVD decomposition \( G = U \Sigma V^\top \) of a matrix \( G \) is a classical and well-studied topic in linear algebra~\citep{autonne1902groupes}. Over the decades, it has been referred to variously as the computation of the \emph{polar factor}, \emph{symmetric orthogonalization}, the \emph{matrix sign function}. \citet{kovarik1970some} and \citet{bjorck1971iterative} proposed computing the polar factor via iterative schemes, and \citet{higham1986computing} introduced a quadratically convergent Newton method.

In machine learning, where model parameters may be matrix-valued, using the polar factor of the gradient matrix as a descent direction can be viewed as a \emph{Stochastic Spectral Descent} method in a particular geometry \citep{carlson2015preconditioned}. This approach has become practical in modern deep learning thanks to GPU-based parallelism, which enables efficient matrix addition and multiplication~\citep{bernstein2024modular}. Its practical viability was demonstrated by \citet{jordan2024muon}, who proposed the Muon algorithm (MomentUm Orthogonalized by Newton--Schulz), showing improved performance over Adam for large-scale neural network training. However, despite referencing the Newton--Schulz algorithm, the actual implementation in \citet{jordan2024muon} (their ``NewtonSchulz5'' subroutine) does not match the standard Newton--Schulz iteration. 

% a more robust and faster
As an alternative to Newton--Schulz, \citet{nakatsukasa2010optimizing} proposed the cubically convergent \emph{Dynamically Weighted Halley} (DWH) algorithm, along with a QR-based variant (QDWH), which is backward stable~\citep{nakatsukasa2012backward}. In a distinct line of development, \citet{amsel2025polar} introduced the worst-case optimal polynomial-based algorithm \emph{PolarExpress} approximating the polar factor.

\section{Novel Framework}\label{ssec:framework}

Our goal is to unify steepest descent, quasi-Newton, and adaptive methods under a single geometric framework. 
The central idea is to define new families of matrix norms that encode preconditioning directly, 
and then to characterize the associated LMOs, which determine the update steps. 

We first introduce two classes of norms that generalize both quasi-Newton and adaptive updates.

\begin{definition}
\label{def:norm_L_R}
For any matrix $G \in \R^{m \times n}$, positive definite matrices $L \in \R^{m \times m}, R \in \R^{n \times n}$, 
and a base matrix norm $\norm{\cdot}$, we call a \emph{$(L,R)$-preconditioned matrix norm},
$$
   \| G \|_{L,R,\norm{\cdot}} \eqdef \| L \cdot G \cdot R \| .
$$
\end{definition}
This general norm includes as a special case Kronecker-factored methods when the base norm is Frobenius ($ \norm{\cdot}_{L,R,\norm{\cdot}_F}$), which leads to the quasi-Newton update \eqref{eq:qn_matrix} from the LMO step \eqref{eq:lmo} \cite{li2024quasi}.

\begin{definition}
\label{def:norm_D}
For any matrix $G \in \R^{m \times n}$, diagonal positive matrix $D \in \R^{m \times n}$, 
and any matrix norm $\norm{\cdot}$, we call \emph{$D$-preconditioned matrix norm},
$$
   \| G \|_{D,\norm{\cdot}} \eqdef \| D \odot G\|.
$$
\end{definition}
This formulation includes adaptive optimizers with Frobenius base norm.

Together, these definitions capture the two major strands of preconditioning in deep learning: Kronecker-based curvature approximations via $(L,R)$ and coordinate-wise scaling via $D$. Note, that if all preconditioners are chosen as identity matrices, 
the framework reduces to the classical steepest descent updates, depending on the base norm $\norm{\cdot}$. 
Thus, many classical algorithms emerge as special cases of the generalized construction.

Linear minimization oracles associated with these norms have a simple structure: they reduce to the LMO of the underlying base norm in a transformed gradient space. To clarify technical details, let's explicitely denote the choice of the base norm in the LMOs an $\lmo_{\norm{\cdot}}$.

\begin{theorem}
\label{thm:lmo_general}
The linear minimization oracles for $(L,R)$-norm and $D$-norm can be expressed as\footnote{For an invertible matrix $M$, we use shorthand notation for the inverse transpose as $M^{-T}$.}
\begin{align*}
   \lmo_{L,R,\norm{\cdot}}(G) &= L^{-1}\lmo_{\norm{\cdot}}(L^{-T} G R^{-T})R^{-1},\\
   \lmo_{D,\norm{\cdot}}(G) &= D^{\circ -1} \odot \lmo_{\norm{\cdot}}(D^{\circ -1} \odot G).
\end{align*}
\end{theorem}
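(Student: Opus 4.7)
The plan is to prove both identities by a direct change of variables in the variational definition of the LMO, reducing each preconditioned problem to an instance of the base-norm LMO evaluated at a transformed gradient.

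For the $(L,R)$-norm, I would start from the defining optimization problem
\[
   \lmo_{L,R,\norm{\cdot}}(G) \in \argmax_{T \in \R^{m\times n}:\, \|LTR\|\le \rho} \langle G, T\rangle
\]
and introduce the substitution $S = LGR$ — more precisely, $S = LTR$ with inverse $T = L^{-1} S R^{-1}$. Because $L$ and $R$ are positive definite (hence invertible), this map is a bijection between the feasible set and $\{S:\|S\|\le\rho\}$. The crux is rewriting the objective: using $\langle A,B\rangle = \mathrm{tr}(A^\top B)$ together with cyclicity of the trace, one obtains $\langle G, L^{-1}SR^{-1}\rangle = \mathrm{tr}(R^{-1}G^\top L^{-1} S) = \langle L^{-T} G R^{-T}, S\rangle$. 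The transformed problem is therefore solved by $S^\star = \lmo_{\norm{\cdot}}(L^{-T} G R^{-T})$, and mapping back via $T^\star = L^{-1} S^\star R^{-1}$ yields the claimed expression.

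For the $D$-norm I would apply the analogous strategy with $S = D \odot T$, whose inverse $T = D^{\circ -1} \odot S$ is well-defined because $D$ has strictly positive diagonal entries. Again this is a bijection between the feasible sets. The inner-product step is simpler here: Hadamard multiplication is self-adjoint under the Frobenius inner product, since $\langle G, D^{\circ -1} \odot S\rangle = \sum_{i,j} G_{ij}(D^{\circ-1})_{ij} S_{ij} = \langle D^{\circ -1} \odot G, S\rangle$. Solving the reduced problem by $S^\star = \lmo_{\norm{\cdot}}(D^{\circ -1}\odot G)$ and un-substituting delivers the second identity.

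There is no genuine analytical obstacle; the proof is essentially a bookkeeping exercise. The only points that require care are (i) verifying that the substitution is a bijection of feasible sets, which is where positive-definiteness of $L,R$ and entrywise positivity of $D$ are used, and (ii) applying the correct adjoint identity so that the transpose-inverse factors land on the gradient side in the $(L,R)$ case. If the LMO is set-valued (the $\argmax$ in \eqref{eq:lmo} need not be unique), the same argument shows that the two sets of maximizers coincide under the bijection, so the stated identities hold as equalities of selections.
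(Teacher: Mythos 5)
Your proof is correct and follows essentially the same route as the paper's: a bijective change of variables $Q = LTR$ (respectively $Q = D\odot T$) in the feasible set, combined with moving the transformation onto the gradient via the adjoint identity $\langle G, L^{-1}QR^{-1}\rangle = \langle L^{-T}GR^{-T}, Q\rangle$ (respectively self-adjointness of the Hadamard product). Your added remark about the set-valued case is a small but welcome refinement over the paper's presentation.
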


Theorem~\ref{thm:lmo_general} highlights that in the preconditioned setting the LMO acts within a transformed gradient space. For the $(L,R)$-norms, the gradient is mapped to the transformed space $L^{-T} G R^{-T}$ where the base LMO is applied, and the result is mapped back to the original space by $L^{-1}$ and $R^{-1}$. This general characterization of LMO enables unification of seemingly different approaches -- norm‑constrained steepest descent, Kronecker‑factored quasi‑Newton methods, adaptive optimizers, and recent hybrids all emerge as special cases. We summarized this observations in Table~\ref{tab:optimization_methods}.
\begin{table}[h!]  
\renewcommand{\arraystretch}{1.5}
\captionof{table}{Popular optimization methods and their parameterization within the unified framework. Sign~``$-$'' indicates that the corresponding parameter is not used in the method. EMA indicates that the exponential moving average of the corresponding quantity is utilized.}
\label{tab:optimization_methods}   
\resizebox{\linewidth}{!}{
  \begin{threeparttable}
    \begin{tabular}{|c|c|c|c|c|c|}
    \cline{2-6}
    \multicolumn{1}{c|}{} & \textbf{Method} & 
    {$\mathbf{L_t}$}
    & 
    {$\mathbf{R_t}$}
    &
    {$\mathbf{D_t}$}
    &
    \textbf{Base Norm}
    \\
    \hline
    \multirow{5}{*}{\rotatebox[origin=c]{90}{\shortstack[c]{\textbf{Norm-based}}}} 
    & \texttt{Normalized SGD} \citep{hazan2015beyond} & $-$ & $-$ & $-$ & Frobenius \\ \cline{2-6}
    & \texttt{SignSGD} \citep{bernstein2018signsgd} & $-$ & $-$ & $-$ & $\ell_1 \to \ell_\infty$ \\ \cline{2-6}
    & \texttt{Muon} \citep{jordan2024muon} & $-$ & $-$ & $-$ & Spectral \\
    \cline{2-6}
    & \texttt{Scion} \citep{pethick2025training} & $-$ & $-$ & $-$ & \begin{tabular}{@{}c@{}}
        matrices: Spectral
        \\
        vectors: $\ell_{\infty}$
    \end{tabular} \\
    \hline
    \hline
    \multirow{4}{*}{\rotatebox[origin=c]{90}{\shortstack[c]{\textbf{Quasi} \\ \textbf{Newton}}}} 
    & \texttt{K-FAC} \citep{martens2015optimizing} & $(\mathbb{E}[GG^\top])^{1/8}$ & $(\mathbb{E}[G^\top G])^{1/8}$ & $-$ & Frobenius \\ \cline{2-6}
    & \texttt{Shampoo} \citep{gupta2018shampoo} & $(\sum_{s=1}^{t-1} G_s G_s^\top)^{1/8}$ & $(\sum_{s=1}^{t-1} G_s^\top G_s)^{1/8}$ & $-$ & Frobenius \\ \cline{2-6}
    & \texttt{One-sided Shampoo} \citep{xie2025structured} & $(\sum_{s=1}^{t-1} G_s G_s^\top)^{1/4}$ & $I$ & $-$ & Frobenius \\ \cline{2-6}
    & \texttt{KL-Shampoo} \citep{lin2025understanding} & $(\text{EMA}[G_t [R_t^TR_t]^{-1} G_t^\top])^{1/8}$ & $(\text{EMA}[G_t^\top [L_t^TL_t]^{-1} G_t])^{1/8}$ & $-$ & Frobenius \\ \cline{2-6}
    \hline
    \hline
    \multirow{4}{*}{\rotatebox[origin=c]{90}{\shortstack[c]{\textbf{Adaptive}}}} 
    & \texttt{AdaGrad} \citep{duchi2011adaptive} & $-$ & $-$ & $(\sum_{s=1}^{t-1} G_s \odot G_s)^{1/4}$ & Frobenius \\ \cline{2-6}
    & \texttt{Adam} \citep{kingma2014adam} & $-$ & $-$ & $(\text{EMA}[G_t \odot G_t])^{1/4}$ & Frobenius \\ \cline{2-6}
    & \texttt{MADGRAD} \citep{defazio2022momentumized} & $-$ & $-$ & $(\text{EMA}[G_t \odot G_t])^{1/6}$ & Frobenius \\ \cline{2-6}
    & \texttt{Adam-SANIA} \citep{abdukhakimov2023sania} & $-$ & $-$ & $(\text{EMA}[G_t \odot G_t])^{1/2}$ & Frobenius \\ 
    \hline
    \hline
    \multirow{4}{*}{\rotatebox[origin=c]{90}{\shortstack[c]{\textbf{Hybrid}}}} 
    & \texttt{SOAP} \citep{vyas2024soap} & $Q_L$ \tnote{{\color{blue} (1)}} & $Q_R$ \tnote{{\color{blue} (1)}} & $-$ & $\norm{\cdot}_{D=\text{Adam},~\norm{\cdot}_F}$\!\!\!\!\!\tnote{{\color{blue}(2)}} \\ \cline{2-6}
    & \texttt{SPlus} \citep{frans2025stable} & $Q_L$ \tnote{{\color{blue} (1)}} & $Q_R$ \tnote{{\color{blue} (1)}}& $-$ & $\ell_1 \to \ell_\infty$ \\ \cline{2-6}
    & \cellcolor{bgcolor2}{\texttt{MuAdam} (Algorithm \ref{alg:muadam} with $p=1/4$)} & \cellcolor{bgcolor2}{$-$} & \cellcolor{bgcolor2}{$-$} & \cellcolor{bgcolor2}{$(\text{EMA}[G_t \odot G_t])^{1/4}$}
    &
    \cellcolor{bgcolor2}{Spectral} \\ \cline{2-6}
    & \cellcolor{bgcolor2}{\texttt{MuAdam-SANIA} (Algorithm \ref{alg:muadam} with $p=1/2$)} & \cellcolor{bgcolor2}{$-$} & \cellcolor{bgcolor2}{$-$} & \cellcolor{bgcolor2}{$(\text{EMA}[G_t \odot G_t])^{1/2}$}
    &
    \cellcolor{bgcolor2}{Spectral} \\ \hline
    \end{tabular}     
    \renewcommand{\TPTnoteSettings}{\small} 
    \begin{tablenotes}
    {  
        \item[] 
        \small
        \tnote{{\color{blue}(1)}} $Q_L, Q_R$ are eigenbasis matrices from Shampoo's preconditioners $\sum_{s=1}^{t-1} G_s G_s^\top$ and $\sum_{s=1}^{t-1} G_s^\top G_s$ respectively.\\
        \tnote{{\color{blue}(2)}} $\norm{\cdot}_{D=\text{Adam},~\norm{\cdot}_F}$ denotes the $D$-norm with Adam diagonal preconditioning and Frobenius norm.
    }
    \end{tablenotes}
    \end{threeparttable}
}
\end{table}
Among the methods listed in \Cref{tab:optimization_methods}, hybrid methods SOAP \citep{vyas2024soap} and SPlus \citep{frans2025stable} deserve special attention. 
Both explicitly combine the geometry of quasi-Newton style preconditioning with 
a linear minimization oracle step: in SOAP, Shampoo’s Kronecker preconditioners \citep{gupta2018shampoo}
are coupled with Adam-style diagonal adaptation \citep{kingma2014adam} performed in the preconditioned eigenbasis, 
while in SPlus, the same Kronecker structure is combined with a sign-based LMO \citep{bernstein2018signsgd}. 
These optimizers exemplify precisely the type of integration that our theory highlights: 
the LMO does not replace the preconditioner but interacts with it in a structured way.  
Empirically, both SOAP and SPlus have shown strong performance on large-scale deep learning benchmarks \citep{semenov2025benchmarking, wen2025fantastic}, 
improving efficiency and robustness compared to their constituent parts. 
They confirm that blending norm-based updates with second-order preconditioning 
yields practical benefits, such as stability at high learning rates \citep{frans2025stable}
and faster convergence with reduced hyperparameter tuning \citep{vyas2024soap}.  

Building on this idea, we introduce two new optimizers: \texttt{MuAdam} and \texttt{MuAdam-SANIA} (Algorithm~\ref{alg:muadam}). 
Both use Muon’s spectral LMO \citep{jordan2024muon} in conjunction with Adam-style \citep{kingma2014adam} or SANIA \citep{abdukhakimov2023sania} 
preconditioners. While \texttt{MuAdam} can be seen as a practical analogue of Adam within a spectral geometry, 
\texttt{MuAdam-SANIA} extends this construction with the scale-invariant SANIA update, 
providing robustness to coordinate-wise rescaling. 
\texttt{MuAdam} can be viewed as a practical optimizer in the same spirit as Adam, 
while \texttt{MuAdam-SANIA} inherits scale-invariance from its preconditioner, 
offering robustness to coordinate-wise rescaling.  

\begin{algorithm}{\texttt{MuAdam} and \texttt{MuAdam-SANIA}}
   \label{alg:muadam}
\begin{algorithmic}[1]
   \State \textbf{Parameters:} step size $\gamma_t$, Adam coefficients $\beta_1,\beta_2 \in (0,1)$, 
   stability constant $\varepsilon > 0$.
   \State \textbf{Initialization:} $M_{-1} = V_{-1} = 0$.

   \For{$t=0,1,2,\dots$}
      \State \textbf{Gradient estimation:} obtain stochastic gradient $G_t = \nabla \mathcal{L}(W_t; \xi_t)$.
      \State \textbf{Moment and precondition updates:} \hfill (Adam \citep{kingma2014adam})
        \[
        M_t = \beta_1 M_{t-1} + (1-\beta_1) G_t, \quad \hat{M}_t = \tfrac{M_t}{1-\beta_1^{t+1}},
        \]
        \[
        V_t = \beta_2 V_{t-1} + (1-\beta_2) (G_t \odot G_t), \quad \hat{V}_t = \tfrac{V_t}{1-\beta_2^{t+1}}.
        \]
        
      \State \textbf{First preconditioning:}
        \[
        N_t = \frac{\hat{M}_t}{\hat{V}_t^{\circ p}+\varepsilon}, 
        \quad \text{where } p = 
        \begin{cases}
           1/4, & \text{for \texttt{MuAdam}}, \\
           1/2, & \text{for \texttt{MuAdam-SANIA}}.
        \end{cases}
        \]

      \State \textbf{Spectral norm based LMO step:} \hfill {(Muon \citep{jordan2024muon})}
        \[
        N_t' = \text{Newton--Schulz}(N_t).
        \]

      \State \textbf{Second preconditioning:}
        \[
        N_t'' = \frac{N_t'}{\hat{V}_t^{\circ p}+\varepsilon}.
        \]

      \State \textbf{Update:} 
        $
        W_{t+1} = W_t - \gamma_t \cdot N_t''.
        $
   \EndFor
\end{algorithmic}
\end{algorithm}
In Algorithm \ref{alg:muadam} we incorporate the momentum term $M_t$, which is standard in 
modern optimization \citep{polyak1964some,kingma2014adam}. 
Furthermore, the appearance of two applications of the element-wise preconditioner $D_t = \hat{V}_t^{\circ p}+\varepsilon$ follows 
directly from Theorem \ref{thm:lmo_general}, which describes how precondition matrices interact with the 
LMO of the base norm. 

Taken together, SOAP \citep{vyas2024soap}, SPlus \citep{frans2025stable}, and proposed \texttt{MuAdam} and \texttt{MuAdam-SANIA} (Algorithm \ref{alg:muadam})
illustrate the potential of systematic combinations of norm-based LMOs with quasi-Newton 
or adaptive preconditioners. Our framework highlights that this design space 
is broad but still under-explored, suggesting a promising direction for future research.

\section{Geometric Properties} \label{ssec:geometric}
As we mentioned earlier, one of the important properties of optimization algorithms is 
\textit{affine} and \textit{scale invariance}~\citep{abdukhakimov2023sania}. Informally, an algorithm is affine invariant if its behavior
is unaffected by a linear reparametrization of the parameters (e.g., changes to the coordinate basis), whereas
scale invariance refers to invariance under coordinate-wise rescaling. 
These properties are desirable because they facilitate the implementation process, ensure that optimization dynamics reflect only geometry of the objective, and can lead to improvement in accuracy \citep{yen2024lora}. In Appendix \ref{app:invariance} we provide a detailed discussion of affine and scale invariance 
for common optimizers.

\paragraph{Invariances for vectors.} 
For the vector-valued losses, affine invariance has been considered only for
transformations of the form $\mathcal{L}_{\text{new}}(w) = \mathcal{L}(A w)$ with a non‑degenerate
matrix $A$ \citep{abdukhakimov2023sania, choudhury2024remove}. 
This covers both full linear reparametrizations and, as a special case of diagonal $A$,
\emph{scale invariance}. 
Formally, we say that an optimization algorithm is scale/affine \emph{invariant} if the iterate sequences of algorithm coincide on $\mathcal{L}$ and its reparametrized version $\mathcal{L}_{\text{new}}$: 
\[
    \mathcal{L}(W_t) \;=\; \mathcal{L}_{\text{new}}(W^{\text{new}}_t) \quad \text{for all iterations $t$},
\] 
where $\{W_t\}$ and $\{W^{\text{new}}_t\}$ denote the iterates produced for $\mathcal{L}$ and $\mathcal{L}_{\text{new}}$, respectively. 

\paragraph{Example.}
Consider a linear model
$
    \mathcal{L}(w) = \big(\langle x, w \rangle - y \big)^2
$
with vector parameters $w$.
If the inputs are linearly transformed, $x \mapsto A^T x$, then the corresponding reparametrized loss is
\[
    \mathcal{L}_{\text{new}}(w) = \big( \langle A^T x, w \rangle - y \big)^2 =  \big( \langle x, A w \rangle - y \big)^2 = \mathcal{L}(A w).
\]
An affine invariant optimization algorithm should then generate identical optimization dynamics under such 
a change of variables. If $A$ is diagonal, this reduces to coordinate‑wise scaling.

\paragraph{Invariances for matrices.}
In this work, we extend these notions for the first time to the case of matrix‑valued
parameters, which are ubiquitous in modern deep learning. We introduce affine
invariance in this setting by defining the reparametrized function as
\[
   \mathcal{L}_{\text{new}}(W) = \mathcal{L}(A_L W A_R),
\]
where $A_L$ and $A_R$ are non‑degenerate matrices. This generalization is consistent
with the classical vector definition but now naturally decomposes into two sides:
left multiplication reflects a change of basis or rescaling of input features, while right multiplication corresponds to transformations of
the output representation. 
Analogously, scale invariance in the matrix case takes the form of
\emph{element‑wise reparametrizations} through the Hadamard product,
\[
   \mathcal{L}_{\text{new}}(W) = \mathcal{L}(A \odot W),
\]
where $A$ is now a positive scaling matrix.
Thus, while in the vector case affine and scale invariance reduce to 
linear and diagonal transformations, in the matrix case they naturally separate into
two distinct but related notions: left/right affine transformations and element‑wise
scaling.

We are going to formalize the conditions under which the LMO step~\eqref{eq:lmo_step} with norms from Definitions \ref{def:norm_L_R} and \ref{def:norm_D}
exhibits affine and scale invariance in the matrix setting. 
In both cases, the result provides a necessary and sufficient characterization, 
directly expressed in terms of the transformation rules for the preconditioners. Result for general norm is presented in Theorem \ref{theorem:lmo_aff} in Appendix \ref{app:proofs}.

\begin{theorem}
\label{theorem:affine_invariance}
Let the LMO step~\eqref{eq:lmo_step} be implemented with the
preconditioned norms (as \Cref{thm:lmo_general}) with a base norm $\norm{\cdot}$ such that the solution is unique. Then the conditions for the invariances are:

\textbf{Affine invariance:}
    For the $(L,R,\norm{\cdot})$‑norm (Definition \ref{def:norm_L_R}), the step is affine invariant 
    with respect to transformations $\mathcal{L}_{\text{new}}(W)=\mathcal{L}(A_L W A_R)$ 
    if and only if the left and right preconditioners $(L^{\mathcal{L}}, R^{\mathcal{L}})$ for the loss $\mathcal{L}$ and $(L^{\mathcal{L}_{\text{new}}}, R^{\mathcal{L}_{\text{new}}})$ for the loss $\mathcal{L}_{\text{new}}$ satisfy
    \[
       L^{\mathcal{L}_{\text{new}}} = L^{\mathcal{L}} A_L, \qquad 
       R^{\mathcal{L}_{\text{new}}} = A_R R^{\mathcal{L}} ,
    \]

\textbf{Scale invariance:}
    For the $D$‑norm (Definition \ref{def:norm_D}), the step is scale invariant with respect to 
    element‑wise rescaling $\mathcal{L}_{\text{new}}(W) = \mathcal{L}(A \odot W)$ 
    if and only if element-wise preconditioners $D^{\mathcal{L}}$ and $D^{\mathcal{L}_{\text{new}}}$ for losses $\mathcal{L}$ and $\mathcal{L}_{\text{new}}$ satisfy
    \[
       D^{\mathcal{L}_{\text{new}}} = A \odot D^{\mathcal{L}}.
    \]
\end{theorem}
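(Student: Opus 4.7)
The proof for both parts follows the same template: (i) reduce trajectory-level invariance to a one-step identity on updates via induction, (ii) substitute the explicit LMO formulas from Theorem~\ref{thm:lmo_general}, (iii) use the chain-rule identity relating gradients of $\mathcal{L}$ and $\mathcal{L}_{\text{new}}$, and (iv) verify sufficiency by algebraic cancellation and necessity by appealing to uniqueness of the base LMO.

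\textbf{Reduction to one step.} Assume inductively that $W_t = A_L W_t^{\text{new}} A_R$ (in the affine case) or $W_t = A \odot W_t^{\text{new}}$ (in the scale case), which certainly gives $\mathcal{L}(W_t) = \mathcal{L}_{\text{new}}(W_t^{\text{new}})$. Plugging the update rule \eqref{eq:W_t+1} into this hypothesis, the inductive step is equivalent to the identities
\[
\Delta W_t = A_L\,\Delta W_t^{\text{new}} A_R \quad\text{or}\quad \Delta W_t = A \odot \Delta W_t^{\text{new}},
\]
respectively. Differentiating through the reparametrization yields the gradient relations $G_t^{\text{new}} = A_L^{\top} G_t A_R^{\top}$ and $G_t^{\text{new}} = A \odot G_t$; this is where the transposes in the preconditioner conditions will originate.

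\textbf{Sufficiency.} For the $(L,R,\|\cdot\|)$-norm I would substitute the two gradients into the formula of Theorem~\ref{thm:lmo_general}, giving
\[
\Delta W_t^{\text{new}} = (L^{\mathcal{L}_{\text{new}}})^{-1}\,\lmo_{\|\cdot\|}\!\bigl((L^{\mathcal{L}_{\text{new}}})^{-T} A_L^{\top} G_t A_R^{\top} (R^{\mathcal{L}_{\text{new}}})^{-T}\bigr)\,(R^{\mathcal{L}_{\text{new}}})^{-1}.
\]
Imposing $L^{\mathcal{L}_{\text{new}}} = L^{\mathcal{L}} A_L$ and $R^{\mathcal{L}_{\text{new}}} = A_R R^{\mathcal{L}}$ makes the inner matrix collapse to $(L^{\mathcal{L}})^{-T} G_t (R^{\mathcal{L}})^{-T}$ and the outer factors absorb $A_L,A_R$ exactly so that $A_L \Delta W_t^{\text{new}} A_R = \Delta W_t$. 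The $D$-norm case is analogous: with $D^{\mathcal{L}_{\text{new}}} = A \odot D^{\mathcal{L}}$, the combination $(D^{\mathcal{L}_{\text{new}}})^{\circ -1} \odot A = (D^{\mathcal{L}})^{\circ -1}$ removes the rescaling inside the LMO, and the leading $(D^{\mathcal{L}_{\text{new}}})^{\circ -1}$ factor contributes the outer $A^{\circ -1}$, yielding $A \odot \Delta W_t^{\text{new}} = \Delta W_t$.

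\textbf{Necessity and main obstacle.} This is where uniqueness of the LMO (assumed in the hypothesis) becomes essential. I would argue that, since $\Delta W_t = A_L \Delta W_t^{\text{new}} A_R$ must hold for every gradient $G_t \in \R^{m\times n}$ (one chooses a loss whose gradient at the current iterate realizes any desired matrix), one obtains a functional identity of the form $\lmo_{\|\cdot\|}(M_1 G M_2) = M_3 \lmo_{\|\cdot\|}(M_4 G M_5) M_6$ for all $G$, where the $M_i$ are products of $A_L, A_R$ and the two sets of preconditioners. Uniqueness upgrades this to an equality of the arguments modulo the invariances of $\lmo_{\|\cdot\|}$: since the base norm is fixed and the solution is unique, the only way the identity can hold for every $G$ is if the transformations match up, forcing $L^{\mathcal{L}_{\text{new}}} (L^{\mathcal{L}})^{-1} = A_L$ (and similarly for $R$). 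The analogous argument for the $D$-norm, using elementwise multiplication in place of two-sided matrix multiplication, gives $D^{\mathcal{L}_{\text{new}}} \oslash D^{\mathcal{L}} = A$ entrywise. The delicate point, and the part I expect to be the main obstacle, is justifying that uniqueness of $\lmo_{\|\cdot\|}$ at every $G$ rules out nontrivial ``symmetries'' of the oracle that could allow the preconditioners to differ from the stated form while still producing identical updates; this is precisely where the uniqueness hypothesis enters and must be used carefully.
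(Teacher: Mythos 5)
Your overall strategy matches the paper's: reduce trajectory invariance to a one-step identity by induction, use the chain-rule relations $G_t^{\text{new}} = A_L^{\top} G_t A_R^{\top}$ (resp.\ $A \odot G_t$), and verify sufficiency by direct cancellation in the formulas of Theorem~\ref{thm:lmo_general}. Your sufficiency argument is correct and essentially identical to what the paper does.

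For necessity, however, you stop at the acknowledged obstacle, and this is exactly where the paper's proof diverges from yours in a way worth noting. Rather than comparing LMO \emph{outputs} as a functional identity $\lmo_{\norm{\cdot}}(M_1 G M_2) = M_3\,\lmo_{\norm{\cdot}}(M_4 G M_5)\,M_6$, the paper (Theorem~\ref{theorem:lmo_aff} in Appendix~\ref{app:proofs}) rewrites both sides as argmins of the \emph{same} linear objective $\langle G, T\rangle$ over two different constraint sets, namely $\{\norm{T}_{\mathcal{L}} \le \rho\}$ and $\{\norm{A_L^{-1} T A_R^{-1}}_{\mathcal{L}_{\text{new}}} \le \rho\}$, obtained by the substitution $T = A_L Q A_R$. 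Requiring the argmins to agree for every $G$ is then an assertion about the two convex bodies rather than about symmetries of the oracle, and the paper concludes that the two norms must coincide, $\norm{T}_{\mathcal{L}} = \norm{A_L^{-1} T A_R^{-1}}_{\mathcal{L}_{\text{new}}}$ for all $T$, from which the preconditioner relations are read off. This change of viewpoint is the missing ingredient in your necessity step: it converts the ``functional identity modulo oracle symmetries'' problem into an identity of norms, which is far more tractable. That said, your worry about nontrivial symmetries is not misplaced: even granting norm equality, the final step $\norm{L^{\mathcal{L}} T R^{\mathcal{L}}} = \norm{L^{\mathcal{L}_{\text{new}}} A_L^{-1} T A_R^{-1} R^{\mathcal{L}_{\text{new}}}}$ for all $T$ pins down the preconditioners only up to isometries of the base norm (e.g.\ orthogonal factors for the Frobenius or spectral norm), and the paper asserts the stated relations without addressing this. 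So the gap you flagged is real, but it is a gap shared with the paper's own argument; what you are genuinely missing relative to the paper is the reduction of necessity to equality of the constraint sets.
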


The theorem shows that invariance is preserved precisely when the preconditioners 
transform consistently with the underlying reparametrization: 
$L,R$ under affine changes and $D$ under coordinate‑wise scaling. 
This provides a clear and practical criterion: the property is not abstract, 
but follows directly from simple structural relations between preconditioners. 
Using Theorem~\ref{theorem:affine_invariance} we now easily proof the scale invariance of \texttt{MuAdam-SANIA}.

\begin{corollary}
\texttt{MuAdam-SANIA} with $\varepsilon = 0$ is scale invariant.
\begin{proof}
    Under coordinate-wise rescaling $\mathcal{L}(W) \mapsto \mathcal{L}_{\text{new}}(W) = \mathcal{L}(A \odot W)$, 
    the gradients transform as $G_t^{\mathcal{L}_{\text{new}}} = A \odot G_t^{\mathcal{L}}$. 
    By induction, the preconditioner update 
    $D_t^{\mathcal{L}} = [\beta_2 D_{t-1}^{\mathcal{L}} + (1-\beta_2)(G_t^{\mathcal{L}} \odot G_t^{\mathcal{L}})]^{\circ 1/2}$ 
    for the function $\mathcal{L}_{\text{new}}$ transforms as
    \begin{align*}
        D^{\mathcal{L}_{\text{new}}}_t &= [\beta_2 D_{t-1}^{\mathcal{L}_{\text{new}}} + (1-\beta_2)(G_t^{\mathcal{L}_{\text{new}}} \odot G_t^{\mathcal{L}_{\text{new}}})]^{\circ 1/2}
        &= A \odot [\beta_2 D_{t-1}^{\mathcal{L}} + (1-\beta_2)(G_t^{\mathcal{L}} \odot G_t^{\mathcal{L}})]^{\circ 1/2} \\
        &= A \odot D^{\mathcal{L}}_t \quad \text{for all $t$}.
    \end{align*}
    Therefore, $D_t$ for \texttt{MuAdam-SANIA} satisfies the condition of Theorem \ref{theorem:affine_invariance} for scale invariance.
\end{proof}
\end{corollary}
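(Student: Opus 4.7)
My plan is to reduce the statement to the scale-invariance condition of \Cref{theorem:affine_invariance} and then verify that condition by tracking how each internal state of Algorithm~\ref{alg:muadam} transforms under the element-wise reparametrization $W \mapsto A \odot W$. Concretely, \Cref{theorem:affine_invariance} asserts that an LMO step built from the $D$-norm is scale invariant if and only if the preconditioner satisfies $D^{\mathcal{L}_{\text{new}}} = A \odot D^{\mathcal{L}}$, so the whole task reduces to showing that \texttt{MuAdam-SANIA}'s $D_t := \hat V_t^{\circ 1/2}$ (valid because $\varepsilon = 0$) transforms this way along the trajectory.

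The first step is to establish the gradient transformation law. Assuming inductively that $W_t^{\mathcal{L}_{\text{new}}} = A^{\circ-1} \odot W_t^{\mathcal{L}}$, applying the chain rule to $\mathcal{L}_{\text{new}}(W) = \mathcal{L}(A \odot W)$ gives
\[
G_t^{\mathcal{L}_{\text{new}}} \;=\; A \odot G_t^{\mathcal{L}}.
\]
From this, I would unroll the EMA update $V_t = \beta_2 V_{t-1} + (1-\beta_2)(G_t \odot G_t)$ starting from $V_{-1} = 0$ and use the identity $(A \odot G_t) \odot (A \odot G_t) = (A \odot A) \odot (G_t \odot G_t)$ at each timestep. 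A straightforward induction then yields
\[
V_t^{\mathcal{L}_{\text{new}}} \;=\; (A \odot A) \odot V_t^{\mathcal{L}}, \qquad \hat V_t^{\mathcal{L}_{\text{new}}} \;=\; (A \odot A) \odot \hat V_t^{\mathcal{L}},
\]
since the bias correction is a common scalar factor. Taking the element-wise square root (using that $A$ is positive so $A \odot A$ has a canonical positive Hadamard root equal to $A$) gives $D_t^{\mathcal{L}_{\text{new}}} = A \odot D_t^{\mathcal{L}}$, which is precisely the hypothesis of the scale-invariance half of \Cref{theorem:affine_invariance}.

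Finally, I would close the induction by invoking \Cref{thm:lmo_general} and \Cref{theorem:affine_invariance}: the transformation rule for $D_t$ guarantees that the LMO output satisfies $\Delta W_t^{\mathcal{L}_{\text{new}}} = A^{\circ-1} \odot \Delta W_t^{\mathcal{L}}$, so $W_{t+1}^{\mathcal{L}_{\text{new}}} = A^{\circ-1} \odot W_{t+1}^{\mathcal{L}}$, completing the inductive step. The only subtlety, and what I expect to be the main obstacle worth flagging, is the interaction between the Newton--Schulz stage and the inductive hypothesis: I need that the intermediate quantity $N_t = \hat M_t / \hat V_t^{\circ p}$ is \emph{invariant} under the reparametrization (because the $A$ factor cancels between numerator and denominator when $p = 1/2$ and $\varepsilon = 0$), so that the Newton--Schulz polynomial applied to $N_t$ yields the same matrix in both problems. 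Once this cancellation is verified, the second preconditioning reintroduces exactly one factor of $A^{\circ-1}$, matching the required update law. Note that this argument breaks down for \texttt{MuAdam} (where $p = 1/4$ leaves a residual factor of $A^{\circ 1/2}$ inside the Newton--Schulz input) and for $\varepsilon > 0$ (where the additive constant destroys the homogeneity), which is consistent with the corollary's hypotheses.
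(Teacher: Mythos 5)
Your proposal is correct and follows essentially the same route as the paper: establish $G_t^{\mathcal{L}_{\text{new}}} = A \odot G_t^{\mathcal{L}}$, propagate it through the exponential moving average by induction to obtain $D_t^{\mathcal{L}_{\text{new}}} = A \odot D_t^{\mathcal{L}}$, and invoke the scale-invariance condition of Theorem~\ref{theorem:affine_invariance}. If anything, your version is slightly more careful than the paper's: you run the induction on $V_t$ so that the Hadamard square root factors cleanly via $(A\odot A)^{\circ 1/2} = A$, and you explicitly verify the cancellation inside the Newton--Schulz input $N_t$ when $p=1/2$ and $\varepsilon=0$, whereas the paper writes the recursion directly on $D_t$ and leaves the downstream verification implicit.
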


\section{Experiments} \label{ssec:experiments}
\subsection{Scale Invariance under Coordinate-wise Rescaling}
\label{ssec:scale_invariance}

We evaluate scale invariance using the Mushrooms dataset from LIBSVM \citep{Chang2011LIBSVM} with a two-layer MLP. To simulate ill-conditioned feature scales, we construct a scaled variant $\tilde{X} = X \,\mathrm{diag}(e)$ where $e_i = \exp(a_i)$ and $a_i \sim \mathrm{Uniform}[-k,k]$ independently. We compare non-scale-invariant methods (AdamW, Muon) against scale-invariant baselines (Adam-SANIA) and our spectral hybrid (\texttt{MuAdam-SANIA}, Algorithm \ref{alg:muadam} with $p=1/2$).

Hyperparameters are tuned using Optuna on validation splits (see Appendix~\ref{app:scale_exp_hyperparams}). As shown in \Cref{fig:scale_exp}, AdamW and Muon degrade on scaled data with higher training loss and reduced test accuracy. In contrast, Adam-SANIA and \texttt{MuAdam-SANIA} maintain overlapping trajectories between original and scaled settings due to scale invariance. \texttt{MuAdam-SANIA} consistently matches or outperforms Adam-SANIA, demonstrating that combining diagonal preconditioning with a spectral LMO yields tangible gains.

\begin{figure}[h]
    \centering
    \includegraphics[width=0.9\linewidth]{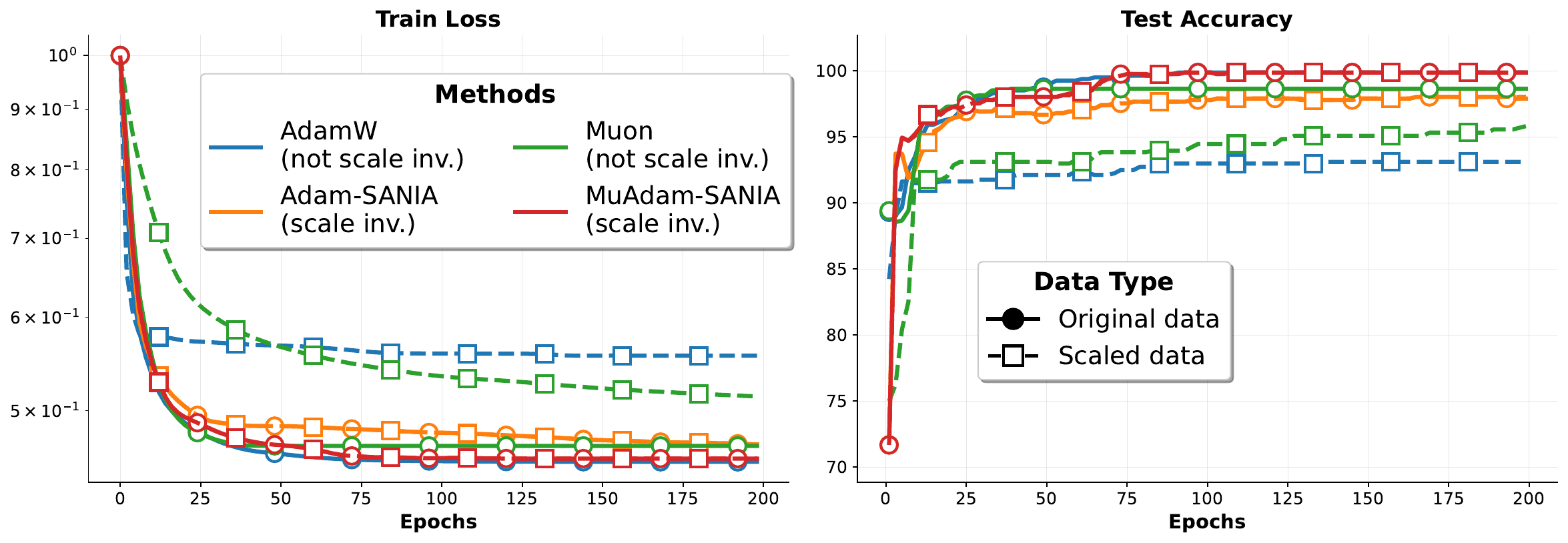}
    \caption{Scale invariance experiment (Mushrooms, LIBSVM) with a two-layer MLP.
    Training loss (left, log-scale) and test accuracy (right) on original vs.\ scaled inputs.}
    \label{fig:scale_exp}
\end{figure}

\subsection{GLUE Benchmark Evaluation}
\label{ssec:glue}

We fine-tune DistilBERT base on GLUE tasks \citep{wang-etal-2018-glue}, comparing \texttt{AdamW}, \texttt{Muon}, and \texttt{MuAdam}. We evaluate both LoRA and full fine-tuning, sweeping four learning rates per dataset and tracking validation metrics. Table~\ref{tab:glue_results_transposed} presents results with columns for datasets and metrics (Matthews correlation for CoLA, accuracy for classification, combined score for STS-B). The \texttt{ALL} column shows task averages.

Full fine-tuning yields higher absolute performance than LoRA, while LoRA remains competitive and parameter-efficient. \texttt{MuAdam} demonstrates competitive performance, often matching or exceeding baselines across tasks, validating the effectiveness of combining spectral geometry with adaptive preconditioning in transformer fine-tuning.

\begin{table}[h!]
\centering
\scriptsize
\setlength{\tabcolsep}{3pt}
\renewcommand{\arraystretch}{1.1}
\captionof{table}{GLUE: datasets are columns with the corresponding metric; \texttt{ALL} is the average over tasks. Best per task in bold.}
\resizebox{0.95\linewidth}{!}{
\begin{tabular}{l lcccccccc|c}
& & \begin{tabular}{@{}c@{}}CoLA\\Matthews\end{tabular} & \begin{tabular}{@{}c@{}}MNLI\\Acc\end{tabular} & \begin{tabular}{@{}c@{}}MRPC\\Acc\end{tabular} & \begin{tabular}{@{}c@{}}QNLI\\Acc\end{tabular} & \begin{tabular}{@{}c@{}}QQP\\Acc\end{tabular} & \begin{tabular}{@{}c@{}}RTE\\Acc\end{tabular} & \begin{tabular}{@{}c@{}}SST-2\\Acc\end{tabular} & \begin{tabular}{@{}c@{}}STS-B\\Comb.\end{tabular} & \begin{tabular}{@{}c@{}}ALL\\Avg\end{tabular} \\
\midrule
\multirow{5}{*}{LoRA} & \texttt{AdaMuon} & 0.5174 & \textbf{0.7836} & 0.8480 & 0.8739 & 0.8584 & \textbf{0.6679} & 0.9037 & 0.8427 & \textbf{0.7869} \\
 & \texttt{AdamW} & 0.4759 & 0.7352 & 0.8309 & 0.8567 & 0.8498 & 0.6390 & 0.8956 & \textbf{0.8473} & 0.7663 \\
 & \texttt{Muon} & 0.4992 & 0.7237 & \textbf{0.8578} & 0.8501 & 0.8322 & 0.6606 & 0.8865 & 0.8392 & 0.7687 \\
 & \texttt{RMSSpectral} & \textbf{0.5327} & 0.7776 & 0.8505 & 0.8682 & \textbf{0.8585} & 0.6534 & 0.8968 & 0.8344 & 0.7840 \\
 & \texttt{RMSSpectral-SANIA} & 0.4990 & 0.7780 & 0.8480 & \textbf{0.8781} & 0.8571 & 0.6209 & \textbf{0.9060} & 0.8422 & 0.7787 \\
\midrule
\multirow{5}{*}{Full} & \texttt{AdaMuon} & 0.4770 & 0.8109 & 0.8260 & 0.8757 & 0.8782 & NaN & NaN & NaN & 0.7736 \\
 & \texttt{AdamW} & \textbf{0.5880} & \textbf{0.8411} & \textbf{0.8578} & \textbf{0.9189} & \textbf{0.9063} & 0.6823 & \textbf{0.9289} & \textbf{0.8932} & \textbf{0.8271} \\
 & \texttt{Muon} & 0.5753 & 0.8068 & 0.8456 & 0.9055 & 0.8958 & \textbf{0.6895} & 0.9197 & 0.8903 & 0.8161 \\
 & \texttt{RMSSpectral} & 0.4800 & 0.8116 & 0.7819 & 0.8794 & 0.8801 & NaN & NaN & NaN & 0.7666 \\
 & \texttt{RMSSpectral-SANIA} & 0.4587 & 0.8122 & 0.8137 & 0.8788 & 0.8782 & NaN & NaN & NaN & 0.7683 \\
\bottomrule
\end{tabular}
}
\label{tab:glue_results_transposed}
\end{table}

\subsection{LLM Fine-Tuning}
\label{ssec:llm}

We fine-tune Qwen2-7B on BoolQ \citep{clark2019boolq}, HellaSwag \citep{zellers2019hellaswag}, and ARC-Challenge \citep{clark2018think} using LoRA, comparing AdamW, Muon, and \texttt{MuAdam} across four learning rates with three random seeds. We report the best accuracy over learning rate sweeps, averaged across seeds with standard deviation error bars.

Figure~\ref{fig:llm_results} shows \texttt{MuAdam} achieves competitive performance, exceeding both AdamW and Muon on BoolQ and HellaSwag while underperforming on ARC-Challenge, demonstrating effectiveness across question-answering and reasoning tasks.

\begin{figure}[t]
\centering
\includegraphics[width=0.8\linewidth]{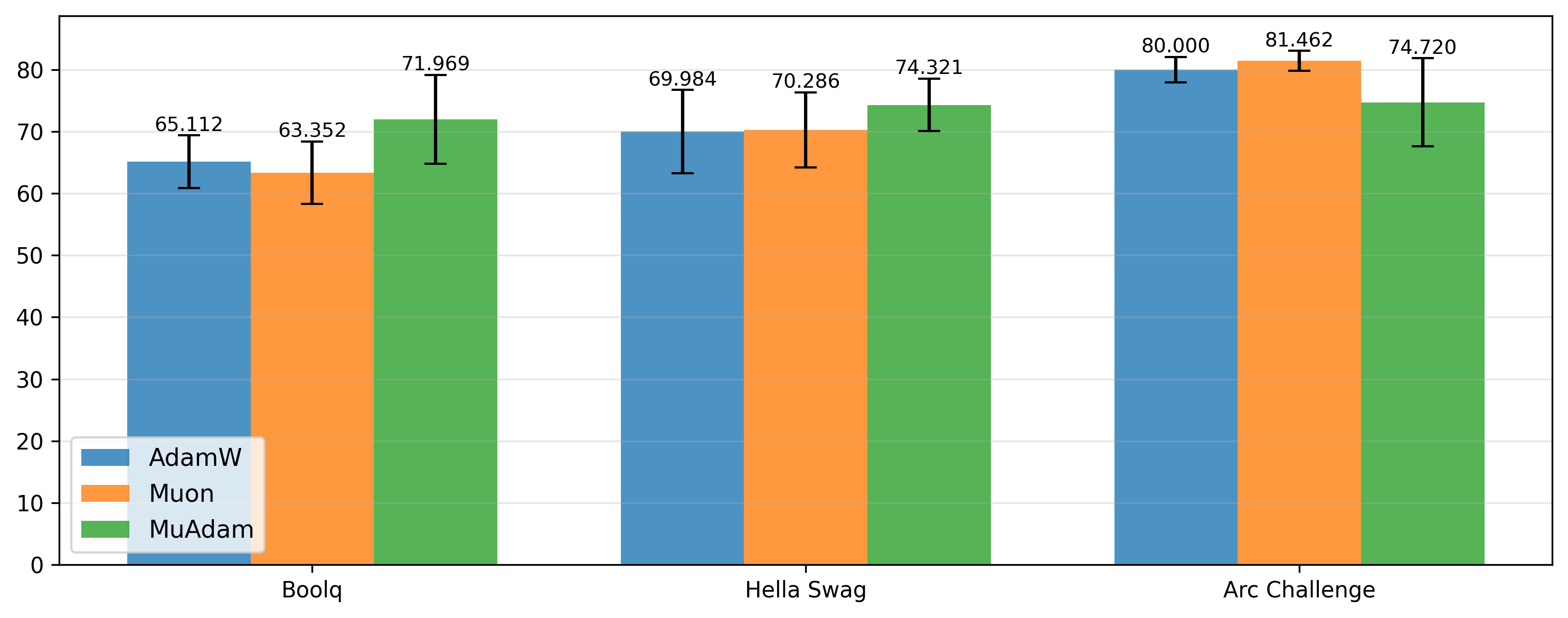}
\caption{LLM fine-tuning results on Qwen2-7B: mean final accuracy with standard deviation across three seeds.}
\label{fig:llm_results}
\end{figure}

\subsection{Character-Level Language Modeling}
\label{ssec:shakespeare}

We evaluate character-level language modeling on the Shakespeare dataset using transformer models with 2, 3, and 4 layers (128 dimensions for 2 layers, 256 for others). Models are trained for 500 epochs with sequence length 256. We perform hyperparameter tuning via random search across batch sizes, learning rates, and dropout values for each optimizer (\texttt{AdamW}, \texttt{Muon}, \texttt{MuAdam}).

Table~\ref{tab:shakespeare_results} shows \texttt{MuAdam} outperforms \texttt{AdamW} on 3 and 4 layer models while slightly underperforming on 2 layers. Both significantly outperform \texttt{Muon} across all configurations, validating that our method successfully combines spectral geometry with adaptive preconditioning.

\begin{table}[h!]
\centering
\captionof{table}{Shakespeare character-level language modeling: best validation accuracy by optimizer and layer count.}
{
\begin{tabular}{l|c|c|c}
& \begin{tabular}{@{}c@{}}2 layers\\Val Acc\end{tabular} & \begin{tabular}{@{}c@{}}3 layers\\Val Acc\end{tabular} & \begin{tabular}{@{}c@{}}4 layers\\Val Acc\end{tabular} \\
\midrule
\texttt{AdamW} & \textbf{0.5506} & 0.5580 & 0.5636 \\

\texttt{Muon} & 0.5382 & 0.5274 & 0.5568 \\

\texttt{MuAdam} & 0.5483 & \textbf{0.5597} & \textbf{0.5664} \\
\bottomrule
\end{tabular}
}
\label{tab:shakespeare_results}
\end{table}

\section{Conclusion}
We introduced a unified framework for optimization with matrix-parameterized models based on preconditioned norms. This abstraction subsumes steepest descent, quasi-Newton, and adaptive methods, and provides the first systematic characterization of affine and scale invariance in the matrix setting. Building upon this foundation, we proposed \texttt{MuAdam} and \texttt{MuAdam-SANIA}, which combine spectral geometry with adaptive preconditioning, achieving strong empirical results across diverse tasks. These results suggest that integrating generalized norms with structured preconditioning offers a rich and still underexplored landscape for the development of next-generation optimization methods.
\end{mainpart}

\begin{appendixpart}

\begin{table}[h!]
    \centering
    \setlength\tabcolsep{3pt} 
    \begin{threeparttable}[th]{
            \captionof{table}{Table of the frequently used notation.}
            \label{tab:table_of_notation}
            \centering 
            \begin{tabular}{c l}
            \Xhline{4\arrayrulewidth}
            \textbf{Notation} & \textbf{Meaning} \\
            \Xhline{2\arrayrulewidth}
            $\mathcal L$ & Objective loss\\
            \hline
            $\otimes$ & Kronecker product\\
            $\odot$ & Hadamard product\\
            $A^{-T}$ & Inverse transpose (of an invertable matrix)\\
            $A^{\circ -1}$ & Elementwise inverse of the matrix A.\\
            \Xhline{2\arrayrulewidth}
            $W_t$ & Iterate sequence\\
            $\Delta W_t$ & Update rule in the interation $t$\\
            $G_t$ & Gradient at iterate $t$\\
            $L_t, R_t, D_t, V_t$ & Left, right, diagonal, elementwise preconditioners\\
            $H^L_t, H^R_t$ & Left and right curvature factors\\
            \Xhline{2\arrayrulewidth}
            $\norm {\cdot}_\alpha, \norm {\cdot}_\beta$ & Vector norms\\
            $\norm {\cdot}_F$ & Frobenius norm\\
            $\norm {\cdot}_{\text{RMS}}$ & RMS norm $=\text{dim}(x)^{\frac 12} \norm x_2$\\
            dim$(x)$ & dimension of vector x\\
            $\norm {\cdot}_{\alpha \to \beta}$ & Matrix norm induced by vector norms $\norm {\cdot}_\alpha, \norm {\cdot}_\beta$\\
            \Xhline{2\arrayrulewidth}
            
            $(L,R)$-norm, $D$-norm & Preconditioned matrix norm\\

            \makecell[c]{$\text{lmo}(\cdot), \text{lmo}_{\norm{\cdot}}(\cdot),$ \\ $\text{lmo}_{L, R, \norm{\cdot}}(\cdot), \text{lmo}_{D,\norm{\cdot}}(\cdot)$} & \makecell[l]{Linear minimization oracles with dependence\\ on the base norm and preconditioners} \\
            \Xhline{4\arrayrulewidth}
            \end{tabular}
        }
    \end{threeparttable}
\end{table}
\section{Additional Discussion on Geometric Invariance} \label{app:invariance}

In this subsection we collect the algebraic details that justify the summary given in Section~\ref{ssec:geometric}.  Throughout we fix an invertible matrix $A\in\R^{d\times d}$ and consider the re--parameterized loss for vectorized parameters
\[
    \Phi(w^A):=\el\bigl(A\,w^{A}\bigr), \qquad  w^{A}:=A^{-1}w,
\]
here we used $\Phi$ instead of $\mathcal{L}_{\text{new}}$ as in Section \ref{ssec:geometric} in terms of convenience, we will do the similar change of notation in the Appendix \ref{app:proofs}.
If an optimizer produces iterates $w_{t}$ for $\el$, we denote by $w^{A}_{t}$ the iterates it produces on $\Phi$.  The method is 
\emph{affine--invariant} iff $w^{A}_{t}=A^{-1}w_{t}$ for all $t$.  A weaker property is obtained when $A$ is restricted to be diagonal with positive entries; this is called \emph{scale invariance}.

\paragraph{Newton’s method \citep{nesterov2018lectures} \;(affine invariant).}
With the Hessian $H_t=\nabla^{2}\el(w_t)$, one step of Newton reads
\[
    w_{t+1}=w_{t}-H_t^{-1}\nabla\el(w_t).
\]
Under the change of variables we have $\nabla\Phi(w_t^{A})=A^{\top}\nabla\el(w_t)$ and $\nabla^{2}\Phi(w_t^{A})=A^{\top}H_tA$.  Hence
\[
    w^{A}_{t+1}=w^{A}_{t}-\bigl(A^{\top}H_tA\bigr)^{-1}A^{\top}\nabla\el(w_t)=A^{-1}\Bigl(w_t-H_t^{-1}\nabla\el(w_t)\Bigr)=A^{-1}w_{t+1},
\]
therefore Newton’s iterates transform equivariantly and the method is fully affine invariant.

\paragraph{Stochastic gradient descent \citep{robbins1951stochastic} \;(not invariant).}
SGD updates are
\[
    w_{t+1}=w_t-\gamma_t\,g_t, \qquad g_t:=\nabla\el(w_t,\xi_t).
\]
After re--parameterisation,
\[
    w^{A}_{t+1}=A^{-1}w_t-\gamma_t A\,g_t \neq A^{-1}w_{t+1},
\]
therefore neither affine nor scale invariance is satisfied.

\paragraph{Adam \citep{kingma2014adam} (with $\varepsilon\!=\!0$) \;(not scale invariant).}
With exponential moving averages $m_t,v_t$ the Adam step is
\[
    w_{t+1}=w_t-\gamma_t\,\frac{m_t}{\sqrt{v_t}}, \quad
    m_t=\beta_1 m_{t-1}+(1-\beta_1)g_t, \quad
    v_t=\beta_2 v_{t-1}+(1-\beta_2)g_t\odot g_t.
\]
Under a diagonal rescaling $A=\mathrm{diag}(a_1,\dots,a_d)$ (coordinate--wise scale change) we have $m^{A}_t=A m_t$ and $v^{A}_t=A^{\!2}v_t$, so
\[
    w^{A}_{t+1}=A^{-1}w_t-\gamma_t\,A\,\frac{m_t}{\sqrt{v_t}}
    \neq A^{-1}w_{t+1}.
\]
The mismatch comes from the square root in the denominator.

\paragraph{SANIA \citep{abdukhakimov2023sania} (scale invariant).}
SANIA removes the square root and normalises by $v_t$ itself:
\[
    w_{t+1}=w_t-\gamma_t\,\frac{m_t}{v_t}.
\]
With the same relations $m^{A}_t=A m_t$ and $v^{A}_t=A^{2} v_t$ we obtain
\[
    w^{A}_{t+1}=A^{-1}w_t-\gamma_t\,A^{-1}\frac{m_t}{v_t}=A^{-1}w_{t+1},
\]
which shows exact scale invariance.  (Full affine invariance is not expected because the pre–conditioner is diagonal.)
\newpage
\section{Missing Proofs}
\label{app:proofs}

\subsection{Proof of Theorem \ref{thm:lmo_general}}

\begin{proof}
By definition of the linear minimization oracle~\eqref{eq:lmo}, we have for a general preconditioned norm
\begin{equation*}
    \lmo_{\mathcal{P}, \norm{\cdot}}(G) 
    = \arg\min_{T : \norm{\mathcal{P}(T)} \leq \rho} \langle G, T \rangle,
\end{equation*}
where $\mathcal{P}$ denotes the preconditioning transform.  
We now proceed case by case.

\medskip\noindent
(i) $(L,R)$‑norm (Definition \ref{def:norm_L_R}).
Here $\mathcal{P}(T) = L T R$.  
Setting $Q = L T R$ (so $T = L^{-1} Q R^{-1}$), and noting that the mapping is bijective since $L,R$ are non‑degenerate, we obtain
\begin{equation*}
\begin{split}
    \lmo_{L,R,\norm{\cdot}}(G)
    &= L^{-1} \cdot 
       \arg\min_{\norm{Q}\leq \rho} \langle G, L^{-1} Q R^{-1}\rangle 
       \cdot R^{-1} \\
    &= L^{-1} \cdot 
       \arg\min_{\norm{Q}\leq \rho} \langle L^{-T} G R^{-T}, Q \rangle 
       \cdot R^{-1} \\
    &= L^{-1} \cdot \lmo_{\norm{\cdot}}(L^{-T} G R^{-T}) \cdot R^{-1}.
\end{split}
\end{equation*}

\medskip\noindent
(ii) $D$‑norm (Definition \ref{def:norm_D}).
Here $\mathcal{P}(T) = D \odot T$.  
Let $Q = D \odot T$, so that $T = D^{\circ -1} \odot Q$.  
Since $D$ has strictly positive entries, this mapping is also bijective. Substituting, we get
\begin{equation*}
\begin{split}
    \lmo_{D,\norm{\cdot}}(G)
    &= D^{\circ -1}\odot 
       \arg\min_{\norm{Q}\leq \rho} \langle G, D^{\circ -1}\odot Q \rangle \\
    &= D^{\circ -1}\odot 
       \arg\min_{\norm{Q}\leq \rho} \langle D^{\circ -1}\odot G, Q \rangle \\
    &= D^{\circ -1}\odot \lmo_{\norm{\cdot}}(D^{\circ -1}\odot G).
\end{split}
\end{equation*}

\medskip
Thus, in both cases the LMO acts not before or after preconditioning, but within a transformed gradient space, yielding the claimed formulas for $(L,R)$‑ and $D$‑norms.
\end{proof}

\subsection{Proof of Theorem \ref{theorem:affine_invariance}}

First we need to prove a more general theorem about affine invariance and arbitrary norm. Also, as in Appendix \ref{app:invariance}, for convenience in this section, we will use a notation $\Phi$ for the changed function, instead of $\mathcal{L}_{\text{new}}$ as we did it in Section \ref{ssec:geometric}.

\begin{theorem}
\label{theorem:lmo_aff}
    Define norms we use for running algorithm \eqref{eq:lmo_step} for functions $\mathcal{L}(W)$ and $\phi(\Theta)$ respectively as $\norm{\cdot}_\mathcal{L}$ and $\norm{\cdot}_\phi$. Then for the step \eqref{eq:lmo_step} to be affine invariant it is necessary that for all matrices $T$ of the proper shape and for all nondegenerate matrices $A_L$ and $A_R$ of the proper shape to satisfy this equation:
    \begin{equation*}
        \norm{T}_\el = \norm{A_L^{-1} T A_R^{-1}}_\phi .
    \end{equation*}
    In order for this condition to become sufficient, we need to require the uniqueness of finding $\arg\min$ in lmo \eqref{eq:lmo}, i.e., for all $G$ of the proper shape it holds that
    \begin{equation*}
        \left| \arg\min_{\norm{T}_\el \leq \rho} \left\{ \langle G, T \rangle \right\} \right| = 1.
    \end{equation*}
    
\end{theorem}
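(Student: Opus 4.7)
My strategy is to reduce the invariance of the whole iterate sequence to a single pointwise equivariance of the two LMOs, and then to handle the two implications separately: by a direct change of variable in the constraint set for sufficiency, and by double dualization for necessity.

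\textbf{Reduction to an LMO identity.} Since $\phi(\Theta) = \mathcal{L}(A_L \Theta A_R)$, the chain rule gives $\nabla \phi(\Theta) = A_L^{\top}\, \nabla \mathcal{L}(A_L \Theta A_R)\, A_R^{\top}$. At matched iterates $W_t = A_L \Theta_t A_R$, a single step of \eqref{eq:lmo_step} preserves the identity $W_{t+1} = A_L \Theta_{t+1} A_R$ if and only if
\[
    \lmo_{\norm{\cdot}_{\mathcal{L}}}(G) \;=\; A_L \cdot \lmo_{\norm{\cdot}_{\phi}}(A_L^{\top} G A_R^{\top}) \cdot A_R \qquad (\star)
\]
holds at $G = \nabla \mathcal{L}(W_t)$. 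To promote $(\star)$ from ``realizable gradients'' to every matrix $G$, I would plug in the linear test loss $\mathcal{L}(W) = \langle G, W\rangle$ initialized at the origin; one step then yields $(\star)$ verbatim up to the common step-size factor. Conversely, $(\star)$ combined with the chain rule upgrades to affine invariance of the entire sequence by a straightforward induction on $t$.

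\textbf{Sufficiency via change of variable.} Assume the norm identity $\norm{T}_{\mathcal{L}} = \norm{A_L^{-1} T A_R^{-1}}_{\phi}$ for all $T$. The bijection $T = A_L S A_R$ maps the constraint set $\{T : \norm{T}_{\mathcal{L}} \le \rho\}$ exactly onto $\{S : \norm{S}_{\phi} \le \rho\}$, while the objective transforms as $\langle G, T\rangle = \langle A_L^{\top} G A_R^{\top}, S\rangle$. Hence the argmax set on the $\mathcal{L}$ side equals $A_L$ times the argmax set on the $\phi$ side at the transformed gradient, times $A_R$. Uniqueness of the maximizer collapses both sets to singletons, delivering $(\star)$ as an equality of elements rather than merely of sets, and the reduction step above closes the argument.

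\textbf{Necessity via double dualization.} Assume the step is affine invariant, so $(\star)$ holds for every $G$. Taking the Frobenius inner product of $(\star)$ with $G$ and using that the LMO attains the scaled dual norm, the left side equals $\rho \norm{G}_{\mathcal{L},*}$ and the right side equals $\rho \norm{A_L^{\top} G A_R^{\top}}_{\phi,*}$, yielding the dual identity $\norm{G}_{\mathcal{L},*} = \norm{A_L^{\top} G A_R^{\top}}_{\phi,*}$. I would then dualize a second time via $\norm{T}_{\mathcal{L}} = \sup_{\norm{G}_{\mathcal{L},*} \le 1} \langle G, T\rangle$ and substitute $G = A_L^{-\top} G' A_R^{-\top}$, which is a bijection preserving the constraint by the dual identity just derived; the inner product transforms as $\langle G, T\rangle = \langle G', A_L^{-1} T A_R^{-1}\rangle$, whose supremum over $\norm{G'}_{\phi,*} \le 1$ is precisely $\norm{A_L^{-1} T A_R^{-1}}_{\phi}$. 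The main obstacle I anticipate is precisely the role of uniqueness in the sufficient direction: without it $(\star)$ degenerates to an inclusion of argmax sets, and two independent runs of the algorithm could legitimately select inconsistent maximizers and break iterate equivariance even while the norm identity still holds; necessity, by contrast, goes through the dual norms cleanly and needs no such hypothesis.
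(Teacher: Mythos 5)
Your proof is correct and its skeleton matches the paper's: reduce affine invariance by induction to a per-step identity between the two LMOs, apply the chain rule $\nabla\phi(\Theta)=A_L^{\top}\nabla\el(W)A_R^{\top}$, and use the bijective change of variable $T=A_L Q A_R$ to transport the constraint set. Where you genuinely diverge is the necessity direction. The paper simply asserts that since the linear objectives agree, equality of the two $\arg\min$ sets for every $G$ forces the two norm balls (hence the norms) to coincide --- a true but unargued step. You instead pair the LMO identity with $G$ to obtain $\norm{G}_{\el,*}=\norm{A_L^{\top}GA_R^{\top}}_{\phi,*}$ for all $G$, and then dualize a second time, substituting $G=A_L^{-\top}G'A_R^{-\top}$ inside $\norm{T}_{\el}=\sup_{\norm{G}_{\el,*}\le 1}\langle G,T\rangle$ to recover $\norm{A_L^{-1}TA_R^{-1}}_{\phi}$. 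This is a cleaner and fully rigorous route; it also has the advantage of being insensitive to which maximizer the algorithm selects when the $\arg\max$ is not a singleton, since every maximizer attains the same value $\rho\norm{G}_{*}$. Your device of realizing an arbitrary $G$ as a gradient via the linear test loss $\el(W)=\langle G,W\rangle$ likewise patches a quantifier the paper leaves implicit (invariance must hold for all losses in order to conclude the identity for all $G$). The sufficiency argument and your closing remark on why uniqueness is needed coincide with the paper's.
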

\begin{proof}
    Since $\phi(\Theta) = \el(A_L \Theta A_R)$, for any optimization algorithm to be affine invariant it is necessary and sufficient to it's output to satisfy $\Theta^t = A_L^{-1} W_t A_R^{-1}$. Since steps in the steepest descent \eqref{eq:lmo_step} are of the form $$W^{t+1} = W^t - \lmo_\el(\nabla\el(W_t)) ~~\text{and}~~ \Theta^{t+1} = \Theta^t - \lmo_\phi(\nabla\phi(\Theta_t)),$$
    where $\lmo_\el$ and $\lmo_\phi$ are linear minimization oracles based on the norms $\norm{\cdot}_\el$ and $\norm{\cdot}_\phi$.

    From the mathematical induction and the fact that $\Theta^0 = A_L^{-1} W_0 A_R^{-1}$, condition $\Theta^t = A_L^{-1} W_t A_R^{-1}$ equivalent to: 
    \begin{equation}
    \label{eq:lmo_aff_0}
        \lmo_\phi(\nabla\phi(\Theta_t)) = A_L^{-1} \cdot \lmo_\el(\nabla\el(W_t) \cdot A_R^{-1}.
    \end{equation}
    For the function $\el(W)$ we have 
    \begin{equation}
    \label{eq:lmo_aff_1}
        \lmo_\el(\nabla\el(W_t)) = \arg\min_{\norm{Q}_\el \leq \rho} \left\{ \langle\nabla\el(W_t), Q \rangle \right\}.
    \end{equation}
    For the function $\phi(\Theta)$, if $\Theta^t = A_L^{-1} W_t A_R^{-1}$, we have
    \begin{equation*}
        \nabla \phi(\Theta_t) 
        = 
        \nabla_{\Theta_t} \el(A_L \Theta A_R) 
        =
        A_L^T \nabla_{A_L \Theta A_R} \el(A_L \Theta_t A_R) A_R^T
        =
        A_L^T \nabla\el(W_t) A_R^T.
    \end{equation*}

    Therefore lmo for the function $\phi$ takes form 
    \begin{align*}
        \lmo_{\phi}(\nabla \phi(\Theta_t)) 
        &= \lmo_\phi(A_L^T \nabla\el(W_t) A_R^T)
        =
        \arg\min_{\norm{Q}_\phi \leq \rho} \left\{ \langle A_L^T \nabla\el(W_t) A_R^T, Q \rangle \right\}
        \\&=
        \arg\min_{\norm{Q}_\phi \leq \rho} \left\{ \langle \nabla\el(W_t), A_L Q A_R\rangle \right\}
    \end{align*}
    Since matrix $A_{L, R}$ are nondegenerate, we can make a variable substitution $T = A_L Q A_R$ and obtain that
    \begin{align}
    \label{eq:lmo_aff_2}
        \lmo_{\phi}(\nabla \phi(\Theta_t)) 
        &=
        A_L^{-1} \cdot \arg\min_{T: \norm{A_L^{-1} T A_R^{-1}}_\phi \leq \rho} \left\{ \langle \nabla\el(W_t), T\rangle \right\} \cdot A_R^{-1}
    \end{align}
    Combining equations \eqref{eq:lmo_aff_0}, \eqref{eq:lmo_aff_1} and \eqref{eq:lmo_aff_2}, we can obtain that for the affine invariance it is necessary and sufficient that for all matrixes $G$ of the proper shape, this equality holds (we change $Q$ to $T$ in \eqref{eq:lmo_aff_1} for convenience):
    \begin{equation*}
        \arg\min_{\norm{T}_\el \leq \rho} \left\{ \langle G, T \rangle \right\}
        =
        \arg\min_{T: \norm{A_L^{-1} T A_R^{-1}}_\phi \leq \rho} \left\{ \langle G, T\rangle \right\}.
    \end{equation*}
    Since minimization functions are the same for both $\arg\min$, the necessary condition of affine invariance is that for all matrices of the proper shape:
    \begin{equation}
    \label{eq:lmo_aff_3}
        \norm{T}_\el = \norm{A_L^{-1} T A_R^{-1}}_\phi
    \end{equation}
    However equation \eqref{eq:lmo_aff_3} is not sufficient, because we need also to require the uniqueness of this $\arg\min$.
\end{proof}

We now ready to proof Theorem \ref{theorem:affine_invariance}.

\begin{proof}[Proof Theorem \ref{theorem:affine_invariance}]
    The proof of Theorem \ref{theorem:affine_invariance} consists of a straightforward application of Theorem \ref{theorem:lmo_aff}. For all matrices $T \in \R^{m \times n}$ the following equality should hold:
    \begin{equation*}
        \norm{L_\el T R_\el} = \norm{L_\phi A_L^{-1} T A_R^{-1} R_\phi}.
    \end{equation*}
    Therefore the necessary condition on the affine invariance is 
    \begin{equation*}
        L_{\el} = L_\phi A_L^{-1} ~~\text{and}~~ R_\el = A_R^{-1} R_\phi.
    \end{equation*}
    In order for this condition to become sufficient, we need to require the uniqueness of finding $\arg\min$ in lmo with the norm $\norm{\cdot}_{L, R, \norm{\cdot}}$. Since matrices $L$ and $R$ for $\el(W)$ and $\phi(\Theta)$ are non-degenerative, the uniqueness deepens only on the norm $\norm{\cdot}$, i.e.,  
    \begin{equation*}
        \left| \arg\min_{\norm{T} \leq \rho} \left\{ \langle G, T \rangle \right\} \right| = 1.
    \end{equation*}

    The scale invariance case is proven in a similar way as for Theorem \ref{thm:lmo_general}.
\end{proof}
\newpage

\subsection{Polar Express on GLUE Benchmark}
\label{ssec:glue}

Polar Express \citep{amsel2025polar} is a stable, iteration-efficient alternative to Newton--Schulz for computing the polar decomposition of a matrix. Replacing Newton--Schulz with Polar Express improves MNLI/QQP and stays competitive elsewhere on GLUE tasks.

\begin{table}[h!]
\centering
\scriptsize
\setlength{\tabcolsep}{3pt}
\renewcommand{\arraystretch}{1.1}
\captionof{table}{GLUE (LoRA): datasets are columns with the corresponding metric; \texttt{ALL} is the average over tasks. Best per task in bold.}
\resizebox{0.95\linewidth}{!}{
\begin{tabular}{l lcccccccc|c}
& & \begin{tabular}{@{}c@{}}CoLA\\Matthews\end{tabular} & \begin{tabular}{@{}c@{}}MNLI\\Acc\end{tabular} & \begin{tabular}{@{}c@{}}MRPC\\Acc\end{tabular} & \begin{tabular}{@{}c@{}}QNLI\\Acc\end{tabular} & \begin{tabular}{@{}c@{}}QQP\\Acc\end{tabular} & \begin{tabular}{@{}c@{}}RTE\\Acc\end{tabular} & \begin{tabular}{@{}c@{}}SST-2\\Acc\end{tabular} & \begin{tabular}{@{}c@{}}STS-B\\Comb.\end{tabular} & \begin{tabular}{@{}c@{}}ALL\\Avg\end{tabular} \\
\midrule
\multirow{3}{*}{LoRA}
& \texttt{AdamW}   & \textbf{0.5401} & 0.7986 & \textbf{0.8578} & \textbf{0.8788} & 0.8756 & \textbf{0.6751} & \textbf{0.9094} & \textbf{0.8634} & \textbf{0.7999} \\
& \texttt{Muon}    & 0.4787 & 0.7642 & 0.8382 & 0.8589 & 0.8739 & 0.6282 & 0.9037 & 0.8516 & 0.7747 \\
& \texttt{MuAdam}  & 0.4939 & \textbf{0.7987} & 0.8407 & 0.8768 & \textbf{0.8966} & 0.6137 & 0.9037 & 0.8508 & 0.7844 \\
\bottomrule
\end{tabular}
}
\label{tab:glue_results_transposed}
\end{table}

\newpage

\section{Scale Invariance Setup and Hyperparameters}
\label{app:scale_exp_hyperparams}

To ensure a fair comparison across optimizers and input scalings, we perform hyperparameter tuning separately for each method and for both the original and scaled tasks using Optuna on a held-out validation split (see \Cref{ssec:scale_invariance}). Tuned values are selected to maximize validation accuracy, and final results are reported on the test split with the chosen configuration. Below we summarize the key training and tuning settings used in our experiments.
\begin{table}[h!]
\centering
\captionof{table}{Summary of training and tuning hyperparameters.}
\label{tab:hparams}
\renewcommand{\arraystretch}{1.2}
\begin{tabular}{l l}
\toprule
\textbf{Parameter} & \textbf{Value} \\
\midrule
Learning rate (tuned by Optuna) & $\mathrm{LogUniform}[1e\text{-}6, 5e0]$ \\
Weight decay (tuned by Optuna) & $\mathrm{LogUniform}[1e\text{-}6, 1e\text{-}2]$ \\
Batch size & \verb|len(train_dataloader)| (full-batch) \\
Training epochs & 200 \\
Tuning epochs & 10 \\
Optuna runs per setting & 40 \\
Hidden dimension (MLP) & 100 \\
Bias terms & Disabled \\
Weight initialization & Input layer: zeros; Output layer: uniform \\
Numerical epsilon & $1\mathrm{e}{-40}$ \\
DType & float64 \\
Scaling bound for data & $k = 10$ \\
Seeds & $\{18, 52, 812\}$ \\
Optimizers compared & AdamW, Muon, Adam-SANIA, \texttt{MuAdam-SANIA} \\
\bottomrule
\end{tabular}
\end{table}

Notes: 
The diagonal preconditioner $D_t$ includes an additive $\varepsilon$, which normally takes values 
around $10^{-8}$ \citep{kingma2014adam}, however this value is big enough to break down scale invariance. 
To minimize this effect we use $\varepsilon=10^{-40}$, therefore its contribution is negligible while still 
ensuring numerical safety. In addition, we employ float64 precision, since the use of such a tiny 
$\varepsilon$ further increases the need for high numerical accuracy, and scale invariance requires well‑defined updates 
even under extremely large or small entries of the scaling matrix $A$.

Also, the input layer of the MLP is initialized with zeroes, which is consistent with the 
scale‑invariance assumption $W_0^{\text{new}} = A^{-1} W_0 = 0$. At the same time, the final output 
layer is initialized with a standard uniform distribution, since setting it to zero would eliminate the 
signal necessary for learning and prevent the network from training.

\newpage
\section{GLUE Fine-Tuning Training Setup and Hyperparameters}
\begin{table}[h!]
\centering
\captionof{table}{GLUE fine-tuning: training setup and sweep.}
\label{tab:glue_hparams}
\renewcommand{\arraystretch}{1.2}
\begin{tabular}{l l}
\toprule
\textbf{Parameter} & \textbf{Value} \\
\midrule
Backbone & \texttt{distilbert/distilbert-base-uncased} \\
Fine-tuning & Full FT or LoRA ($r{=}4$, $\alpha{=}32$, dropout $0.05$) \\
Batch size & 16 \\
Grad. accumulation & 2 \\
DType & bfloat16 \\
LR scheduler & linear, warmup ratio $0.1$ \\
Max train steps & 10000 \\
Eval/save & eval each epoch; no checkpoint saving \\
Sweep LRs & $\{2\!\times\!10^{-4},\,10^{-4},\,5\!\times\!10^{-5},\,3\!\times\!10^{-5}\}$ \\
Optimizers compared & AdamW, Muon, \texttt{MuAdam} \\
\bottomrule
\end{tabular}
\end{table}

Notes: results are picked as the best validation score observed across all evaluation steps within each run, then the best over the LR sweep per optimizer.
\newpage
\section{LLM Fine-Tuning Setup and Hyperparameters}
\label{app:llm_hparams}

We fine-tune Qwen2-7B on three reasoning datasets using LoRA with a grid search over learning rates. Results are averaged over multiple seeds for statistical robustness.

\begin{table}[h!]
\centering
\captionof{table}{LLM fine-tuning: training setup and sweep.}
\label{tab:llm_hparams}
\renewcommand{\arraystretch}{1.2}
\begin{tabular}{l l}
\toprule
\textbf{Parameter} & \textbf{Value} \\
\midrule
Backbone & \texttt{Qwen/Qwen2-7B} \\
Fine-tuning & LoRA ($r{=}16$, $\alpha{=}32$, dropout $0.05$) \\
Batch size & 1 \\
Grad. accumulation & 4 \\
Quantization & 4-bit \\
DType & bfloat16 \\
LR scheduler & linear, warmup ratio $0.1$ \\
Max train steps & 1000 \\
Max seq length & 512 \\
Datasets & BoolQ, HellaSwag, ARC-Challenge \\
Sweep LRs & $\{2\!\times\!10^{-4},\,10^{-4},\,5\!\times\!10^{-5},\,3\!\times\!10^{-5}\}$ \\
Seeds & $\{42, 123, 456\}$ \\
Optimizers compared & AdamW, Muon, \texttt{MuAdam} \\
\bottomrule
\end{tabular}
\end{table}

Notes: final accuracy is selected as the best value across all evaluation steps within each run, then the best over the LR sweep per optimizer. Results are averaged across three seeds with standard deviation reported.

\newpage
\section{Character-Level Language Modeling Setup and Hyperparameters}
\label{app:shakespeare_hparams}

We evaluate optimizers on character-level language modeling using the Shakespeare dataset with transformer models of varying architecture complexity. Hyperparameters are tuned using random search across multiple configurations to ensure fair comparison between optimizers.

\begin{table}[h!]
\centering
\captionof{table}{Shakespeare character-level language modeling: training setup and sweep.}
\label{tab:shakespeare_hparams}
\renewcommand{\arraystretch}{1.2}
\begin{tabular}{l l}
\toprule
\textbf{Parameter} & \textbf{Value} \\
\midrule
Model & Transformer (base config) \\
Dataset & \texttt{shakespeare-char} \\
Model layers & 2, 3, 4 \\
Attention heads & 4 \\
Embedding dim & 128 (2 layers), 256 (3-4 layers) \\
Vocab size & 96 \\
Batch size & Random search: \{32, 128, 256\} \\
Sequence length & 256 \\
Grad clip & 0.5 \\
Weight decay & 0.1 \\
LR scheduler & Cosine \\
Dropout & Random search: \{0.05, 0.1, 0.15, 0.25\} \\
Beta1, Beta2 & 0.9, 0.999 \\
LR sweep & $\{10^{-6}, 5\cdot 10^{-6}, 10^{-5}, 5\cdot 10^{-5}, 10^{-4}, 5\cdot 10^{-4}, 10^{-3}, 5\cdot 10^{-3}, 10^{-2}, 3 \cdot 10^{-2}\}$ \\
Optimizers compared & AdamW, Muon, \texttt{MuAdam} \\
\bottomrule
\end{tabular}
\end{table}
\end{appendixpart}

\begin{thebibliography}{70}
\providecommand{\natexlab}[1]{#1}
\providecommand{\url}[1]{\texttt{#1}}
\expandafter\ifx\csname urlstyle\endcsname\relax
  \providecommand{\doi}[1]{doi: #1}\else
  \providecommand{\doi}{doi: \begingroup \urlstyle{rm}\Url}\fi

\bibitem[Abdukhakimov et~al.(2023)Abdukhakimov, Xiang, Kamzolov, Gower, and Tak{\'a}{\v{c}}]{abdukhakimov2023sania}
Farshed Abdukhakimov, Chulu Xiang, Dmitry Kamzolov, Robert Gower, and Martin Tak{\'a}{\v{c}}.
\newblock Sania: Polyak-type optimization framework leads to scale invariant stochastic algorithms.
\newblock \emph{arXiv preprint arXiv:2312.17369}, 2023.

\bibitem[Amsel et~al.(2025)Amsel, Persson, Musco, and Gower]{amsel2025polar}
Noah Amsel, David Persson, Christopher Musco, and Robert Gower.
\newblock The polar express: Optimal matrix sign methods and their application to the muon algorithm.
\newblock \emph{ArXiv}, abs/2505.16932, 2025.
\newblock URL \url{https://api.semanticscholar.org/CorpusID:278789329}.

\bibitem[Autonne(1902)]{autonne1902groupes}
L{\'e}on Autonne.
\newblock Sur les groupes lin{\'e}aires, r{\'e}els et orthogonaux.
\newblock \emph{Bulletin de la soci{\'e}t{\'e} math{\'e}matique de France}, 30:\penalty0 121--134, 1902.

\bibitem[Bernstein and Newhouse(2024)]{bernstein2024old}
Jeremy Bernstein and Laker Newhouse.
\newblock Old optimizer, new norm: An anthology.
\newblock \emph{arXiv preprint arXiv:2409.20325}, 2024.

\bibitem[Bernstein and Newhouse(2025)]{bernstein2024modular}
Jeremy Bernstein and Laker Newhouse.
\newblock Modular duality in deep learning.
\newblock In \emph{International Conference on Machine Learning}, volume 267 of \emph{Proceedings of Machine Learning Research}, pages 3920--3930. PMLR, 2025.

\bibitem[Bernstein et~al.(2018)Bernstein, Wang, Azizzadenesheli, and Anandkumar]{bernstein2018signsgd}
Jeremy Bernstein, Yu-Xiang Wang, Kamyar Azizzadenesheli, and Animashree Anandkumar.
\newblock {signSGD}: Compressed optimisation for non-convex problems.
\newblock In \emph{International Conference on Machine Learning}, pages 560--569. PMLR, 2018.

\bibitem[Bj{\"o}rck and Bowie(1971)]{bjorck1971iterative}
{\AA}ke Bj{\"o}rck and Clazett Bowie.
\newblock An iterative algorithm for computing the best estimate of an orthogonal matrix.
\newblock \emph{SIAM Journal on Numerical Analysis}, 8\penalty0 (2):\penalty0 358--364, 1971.

\bibitem[Bottou(2010)]{bottou2010large}
L{\'e}on Bottou.
\newblock Large-scale machine learning with stochastic gradient descent.
\newblock In \emph{Proceedings of COMPSTAT'2010: 19th International Conference on Computational StatisticsParis France, August 22-27, 2010 Keynote, Invited and Contributed Papers}, pages 177--186. Springer, 2010.

\bibitem[Brown et~al.(2020)Brown, Mann, Ryder, Subbiah, Kaplan, Dhariwal, Neelakantan, Shyam, Sastry, Askell, et~al.]{brown2020language}
Tom Brown, Benjamin Mann, Nick Ryder, Melanie Subbiah, Jared Kaplan, Prafulla Dhariwal, Arvind Neelakantan, Pranav Shyam, Girish Sastry, Amanda Askell, et~al.
\newblock Language models are few-shot learners.
\newblock \emph{Advances in Neural Information Processing Systems}, 33:\penalty0 1877--1901, 2020.

\bibitem[Broyden(1970)]{broyden1970convergence}
Charles Broyden.
\newblock The convergence of a class of double-rank minimization algorithms 1. general considerations.
\newblock \emph{IMA Journal of Applied Mathematics}, 6\penalty0 (1):\penalty0 76--90, 1970.

\bibitem[Broyden(1967)]{broyden1967quasi}
Charles~G Broyden.
\newblock Quasi-newton methods and their application to function minimisation.
\newblock \emph{Mathematics of Computation}, 21\penalty0 (99):\penalty0 368--381, 1967.

\bibitem[Carlson et~al.(2015{\natexlab{a}})Carlson, Cevher, and Carin]{carlson2015stochastic}
David Carlson, Volkan Cevher, and Lawrence Carin.
\newblock {Stochastic Spectral Descent for Restricted Boltzmann Machines}.
\newblock In Guy Lebanon and S.~V.~N. Vishwanathan, editors, \emph{Proceedings of the Eighteenth International Conference on Artificial Intelligence and Statistics}, volume~38 of \emph{Proceedings of Machine Learning Research}, pages 111--119, San Diego, California, USA, 09--12 May 2015{\natexlab{a}}. PMLR.
\newblock URL \url{https://proceedings.mlr.press/v38/carlson15.html}.

\bibitem[Carlson et~al.(2015{\natexlab{b}})Carlson, Collins, Hsieh, Carin, and Cevher]{carlson2015preconditioned}
David Carlson, Edo Collins, Ya-Ping Hsieh, Lawrence Carin, and Volkan Cevher.
\newblock Preconditioned spectral descent for deep learning.
\newblock \emph{Advances in neural information processing systems}, 28, 2015{\natexlab{b}}.

\bibitem[Carlson et~al.(2015{\natexlab{c}})Carlson, Hsieh, Collins, Carin, and Cevher]{carlson2015stochasticgraphical}
David Carlson, Ya-Ping Hsieh, Edo Collins, Lawrence Carin, and Volkan Cevher.
\newblock Stochastic spectral descent for discrete graphical models.
\newblock \emph{IEEE Journal of Selected Topics in Signal Processing}, 10\penalty0 (2):\penalty0 296--311, 2015{\natexlab{c}}.

\bibitem[Chang and Lin(2011)]{Chang2011LIBSVM}
Chih-Chung Chang and Chih-Jen Lin.
\newblock {LIBSVM: A Library for Support Vector Machines}.
\newblock \emph{ACM Transactions on Intelligent Systems and Technology (TIST)}, 2\penalty0 (3):\penalty0 1--27, 2011.

\bibitem[Chen et~al.(2025)Chen, Dong, Wei, Huang, Zhang, Su, and Zhu]{chen2025understanding}
Huanran Chen, Yinpeng Dong, Zeming Wei, Yao Huang, Yichi Zhang, Hang Su, and Jun Zhu.
\newblock Unveiling the basin-like loss landscape in large language models.
\newblock \emph{arXiv preprint arXiv:2505.17646}, 2025.

\bibitem[Choromanska et~al.(2015)Choromanska, Henaff, Mathieu, Arous, and LeCun]{choromanska2015loss}
Anna Choromanska, Mikael Henaff, Michael Mathieu, G{\'e}rard~Ben Arous, and Yann LeCun.
\newblock The loss surfaces of multilayer networks.
\newblock In \emph{Artificial intelligence and statistics}, pages 192--204. PMLR, 2015.

\bibitem[Choudhury et~al.(2024)Choudhury, Tupitsa, Loizou, Horv{\'a}th, Takac, and Gorbunov]{choudhury2024remove}
Sayantan Choudhury, Nazarii Tupitsa, Nicolas Loizou, Samuel Horv{\'a}th, Martin Takac, and Eduard Gorbunov.
\newblock Remove that square root: A new efficient scale-invariant version of adagrad.
\newblock \emph{Advances in Neural Information Processing Systems}, 37:\penalty0 47400--47431, 2024.

\bibitem[Clark et~al.(2019)Clark, Lee, Chang, Kwiatkowski, Collins, and Toutanova]{clark2019boolq}
Christopher Clark, Kenton Lee, Ming-Wei Chang, Tom Kwiatkowski, Michael Collins, and Kristina Toutanova.
\newblock Boolq: Exploring the surprising difficulty of natural yes/no questions, 2019.

\bibitem[Clark et~al.(2018)Clark, Cowhey, Etzioni, Khot, Sabharwal, Schoenick, and Tafjord]{clark2018think}
Peter Clark, Isaac Cowhey, Oren Etzioni, Tushar Khot, Ashish Sabharwal, Carissa Schoenick, and Oyvind Tafjord.
\newblock Think you have solved question answering? try arc, the ai2 reasoning challenge, 2018.

\bibitem[Crawshaw et~al.(2025)Crawshaw, Modi, Liu, and Gower]{crawshaw2025exploration}
Michael Crawshaw, Chirag Modi, Mingrui Liu, and Robert Gower.
\newblock An exploration of non-euclidean gradient descent: Muon and its many variants.
\newblock \emph{arXiv preprint arXiv:2510.09827}, 2025.

\bibitem[d'Aspremont et~al.(2018)d'Aspremont, Guzman, and Jaggi]{d2018optimal}
Alexandre d'Aspremont, Cristobal Guzman, and Martin Jaggi.
\newblock Optimal affine-invariant smooth minimization algorithms.
\newblock \emph{SIAM Journal on Optimization}, 28\penalty0 (3):\penalty0 2384--2405, 2018.

\bibitem[Davidon(1959)]{davidon1959variable}
William Davidon.
\newblock Variable metric method for minimization.
\newblock Technical report, Argonne National Lab., Lemont, Ill., 1959.

\bibitem[Dean et~al.(2012)Dean, Corrado, Monga, Chen, Devin, Mao, Ranzato, Senior, Tucker, Yang, et~al.]{dean2012large}
Jeffrey Dean, Greg Corrado, Rajat Monga, Kai Chen, Matthieu Devin, Mark Mao, Marc'aurelio Ranzato, Andrew Senior, Paul Tucker, Ke~Yang, et~al.
\newblock Large scale distributed deep networks.
\newblock \emph{Advances in neural information processing systems}, 25, 2012.

\bibitem[Defazio and Jelassi(2022)]{defazio2022momentumized}
Aaron Defazio and Samy Jelassi.
\newblock Adaptivity without compromise: A momentumized, adaptive, dual averaged gradient method for stochastic optimization.
\newblock \emph{Journal of Machine Learning Research}, 23\penalty0 (144):\penalty0 1--34, 2022.

\bibitem[Duchi et~al.(2011)Duchi, Hazan, and Singer]{duchi2011adaptive}
John Duchi, Elad Hazan, and Yoram Singer.
\newblock Adaptive subgradient methods for online learning and stochastic optimization.
\newblock \emph{Journal of machine learning research}, 12\penalty0 (7), 2011.

\bibitem[Fletcher(1970)]{fletcher1970new}
Roger Fletcher.
\newblock A new approach to variable metric algorithms.
\newblock \emph{The Computer Journal}, 13\penalty0 (3):\penalty0 317--322, 1970.

\bibitem[Frans et~al.(2025)Frans, Levine, and Abbeel]{frans2025stable}
Kevin Frans, Sergey Levine, and Pieter Abbeel.
\newblock A stable whitening optimizer for efficient neural network training.
\newblock In \emph{Advances in Neural Information Processing Systems}, 2025.
\newblock arXiv:2506.07254.

\bibitem[Gao et~al.(2025)Gao, Chu, Ye, and Udell]{gao2024gradient}
Wenzhi Gao, Ya-Chi Chu, Yinyu Ye, and Madeleine Udell.
\newblock Gradient methods with online scaling.
\newblock In \emph{Conference on Learning Theory}, volume 291 of \emph{Proceedings of Machine Learning Research}, pages 2192--2226. PMLR, 2025.

\bibitem[Goldfarb(1970)]{goldfarb1970family}
Donald Goldfarb.
\newblock A family of variable-metric methods derived by variational means.
\newblock \emph{Mathematics of Computation}, 24\penalty0 (109):\penalty0 23--26, 1970.

\bibitem[Goldfarb et~al.(2020)Goldfarb, Ren, and Bahamou]{goldfarb2020practical}
Donald Goldfarb, Yi~Ren, and Achraf Bahamou.
\newblock Practical quasi-newton methods for training deep neural networks.
\newblock \emph{Advances in Neural Information Processing Systems}, 33:\penalty0 2386--2396, 2020.

\bibitem[Goodfellow et~al.(2014)Goodfellow, Pouget-Abadie, Mirza, Xu, Warde-Farley, Ozair, Courville, and Bengio]{goodfellow2014generative}
Ian Goodfellow, Jean Pouget-Abadie, Mehdi Mirza, Bing Xu, David Warde-Farley, Sherjil Ozair, Aaron Courville, and Yoshua Bengio.
\newblock Generative adversarial nets.
\newblock \emph{Advances in Neural Information Processing Systems}, 27, 2014.

\bibitem[Goodfellow et~al.(2016)Goodfellow, Bengio, Courville, and Bengio]{goodfellow2016deep}
Ian Goodfellow, Yoshua Bengio, Aaron Courville, and Yoshua Bengio.
\newblock \emph{Deep learning}, volume~1.
\newblock MIT press Cambridge, 2016.

\bibitem[Gupta et~al.(2018)Gupta, Koren, and Singer]{gupta2018shampoo}
Vineet Gupta, Tomer Koren, and Yoram Singer.
\newblock Shampoo: Preconditioned stochastic tensor optimization.
\newblock In \emph{International Conference on Machine Learning}, pages 1842--1850. PMLR, 2018.

\bibitem[Hazan et~al.(2015)Hazan, Levy, and Shalev-Shwartz]{hazan2015beyond}
Elad Hazan, Kfir Levy, and Shai Shalev-Shwartz.
\newblock Beyond convexity: Stochastic quasi-convex optimization.
\newblock \emph{Advances in Neural Information Processing Systems}, 28, 2015.

\bibitem[Hern{\'a}ndez-Cano et~al.(2025)Hern{\'a}ndez-Cano, H{\"a}gele, Huang, Romanou, Solergibert, Pasztor, Messmer, Garbaya, {\v{D}}urech, Hakimi, et~al.]{hernandez2025apertus}
Alejandro Hern{\'a}ndez-Cano, Alexander H{\"a}gele, Allen~Hao Huang, Angelika Romanou, Antoni-Joan Solergibert, Barna Pasztor, Bettina Messmer, Dhia Garbaya, Eduard~Frank {\v{D}}urech, Ido Hakimi, et~al.
\newblock Apertus: Democratizing open and compliant llms for global language environments.
\newblock \emph{arXiv preprint arXiv:2509.14233}, 2025.

\bibitem[Higham(1986)]{higham1986computing}
Nicholas~J Higham.
\newblock Computing the polar decomposition—with applications.
\newblock \emph{SIAM Journal on Scientific and Statistical Computing}, 7\penalty0 (4):\penalty0 1160--1174, 1986.

\bibitem[Jordan et~al.(2024)Jordan, Jin, Boza, Jiacheng, Cecista, Newhouse, and Bernstein]{jordan2024muon}
Keller Jordan, Yuchen Jin, Vlado Boza, You Jiacheng, Franz Cecista, Laker Newhouse, and Jeremy Bernstein.
\newblock Muon: An optimizer for hidden layers in neural networks, 2024.
\newblock URL \url{https://kellerjordan.github.io/posts/muon/}.

\bibitem[Kingma and Ba(2014)]{kingma2014adam}
Diederik Kingma and Jimmy Ba.
\newblock Adam: A method for stochastic optimization.
\newblock \emph{arXiv preprint arXiv:1412.6980}, 2014.

\bibitem[Kovarik(1970)]{kovarik1970some}
Zdislav Kovarik.
\newblock Some iterative methods for improving orthonormality.
\newblock \emph{SIAM Journal on Numerical Analysis}, 7\penalty0 (3):\penalty0 386--389, 1970.

\bibitem[Li(2024)]{li2024quasi}
Jiongcheng Li.
\newblock Quasi-newton method of optimization is proved to be a steepest descent method under the ellipsoid norm.
\newblock \emph{arXiv preprint arXiv:2411.11286}, 2024.

\bibitem[Lin et~al.(2025)Lin, Lowe, Dangel, Eschenhagen, Xu, and Grosse]{lin2025understanding}
Wu~Lin, Scott~C. Lowe, Felix Dangel, Runa Eschenhagen, Zikun Xu, and Roger~B. Grosse.
\newblock Understanding and improving shampoo and {SOAP} via {Kullback-Leibler} minimization.
\newblock \emph{arXiv preprint arXiv:2509.03378}, 2025.

\bibitem[Liu and Nocedal(1989)]{liu1989limited}
Dong~C. Liu and Jorge Nocedal.
\newblock On the limited memory bfgs method for large scale optimization.
\newblock \emph{Mathematical Programming}, 45\penalty0 (1):\penalty0 503--528, 1989.

\bibitem[Martens and Grosse(2015)]{martens2015optimizing}
James Martens and Roger Grosse.
\newblock Optimizing neural networks with kronecker-factored approximate curvature.
\newblock In \emph{International Conference on Machine Learning}, pages 2408--2417. PMLR, 2015.

\bibitem[Nakatsukasa and Higham(2012)]{nakatsukasa2012backward}
Yuji Nakatsukasa and Nicholas Higham.
\newblock Backward stability of iterations for computing the polar decomposition.
\newblock \emph{SIAM Journal on Matrix Analysis and Applications}, 33\penalty0 (2):\penalty0 460--479, 2012.

\bibitem[Nakatsukasa et~al.(2010)Nakatsukasa, Bai, and Gygi]{nakatsukasa2010optimizing}
Yuji Nakatsukasa, Zhaojun Bai, and Fran{\c{c}}ois Gygi.
\newblock Optimizing halley's iteration for computing the matrix polar decomposition.
\newblock \emph{SIAM Journal on Matrix Analysis and Applications}, 31\penalty0 (5):\penalty0 2700--2720, 2010.

\bibitem[Nesterov and Nemirovski(1994)]{nesterov1994interior}
Yurii Nesterov and Arkadi Nemirovski.
\newblock \emph{{Interior-Point Polynomial Algorithms in Convex Programming}}.
\newblock SIAM, 1994.

\bibitem[Nesterov et~al.(2018)]{nesterov2018lectures}
Yurii Nesterov et~al.
\newblock \emph{Lectures on convex optimization}, volume 137.
\newblock Springer, 2018.

\bibitem[Pethick et~al.(2025)Pethick, Xie, Antonakopoulos, Zhu, Silveti-Falls, and Cevher]{pethick2025training}
Thomas Pethick, Wanyun Xie, Kimon Antonakopoulos, Zhenyu Zhu, Antonio Silveti-Falls, and Volkan Cevher.
\newblock Training deep learning models with norm-constrained lmos.
\newblock \emph{arXiv preprint arXiv:2502.07529}, 2025.

\bibitem[Polyak(1964)]{polyak1964some}
Boris~T Polyak.
\newblock Some methods of speeding up the convergence of iteration methods.
\newblock \emph{Ussr computational mathematics and mathematical physics}, 4\penalty0 (5):\penalty0 1--17, 1964.

\bibitem[Riabinin et~al.(2025)Riabinin, Shulgin, Gruntkowska, and Richt{\'a}rik]{riabinin2025gluon}
Artem Riabinin, Egor Shulgin, Kaja Gruntkowska, and Peter Richt{\'a}rik.
\newblock Gluon: Making muon \& scion great again! (bridging theory and practice of lmo-based optimizers for llms).
\newblock \emph{arXiv preprint arXiv:2505.13416}, 2025.

\bibitem[Robbins and Monro(1951)]{robbins1951stochastic}
Herbert Robbins and Sutton Monro.
\newblock A stochastic approximation method.
\newblock \emph{The annals of mathematical statistics}, pages 400--407, 1951.

\bibitem[Rombach et~al.(2022)Rombach, Blattmann, Lorenz, Esser, and Ommer]{rombach2022high}
Robin Rombach, Andreas Blattmann, Dominik Lorenz, Patrick Esser, and Bj{\"o}rn Ommer.
\newblock High-resolution image synthesis with latent diffusion models.
\newblock In \emph{Proceedings of the IEEE/CVF conference on computer vision and pattern recognition}, pages 10684--10695, 2022.

\bibitem[Rumelhart et~al.(1986)Rumelhart, Hinton, and Williams]{rumelhart1986learning}
David Rumelhart, Geoffrey Hinton, and Ronald Williams.
\newblock Learning representations by back-propagating errors.
\newblock \emph{Nature}, 323\penalty0 (6088):\penalty0 533--536, 1986.

\bibitem[Semenov et~al.(2025)Semenov, Pagliardini, and Jaggi]{semenov2025benchmarking}
Andrei Semenov, Matteo Pagliardini, and Martin Jaggi.
\newblock Benchmarking optimizers for large language model pretraining.
\newblock \emph{arXiv preprint arXiv:2509.01440}, 2025.

\bibitem[Shalev-Shwartz and Ben-David(2014)]{shalev2014understanding}
Shai Shalev-Shwartz and Shai Ben-David.
\newblock \emph{Understanding machine learning: From theory to algorithms}.
\newblock Cambridge university press, 2014.

\bibitem[Shanno(1970)]{shanno1970conditioning}
David Shanno.
\newblock Conditioning of quasi-{N}ewton methods for function minimization.
\newblock \emph{Mathematics of Computation}, 24\penalty0 (111):\penalty0 647--656, 1970.

\bibitem[Team et~al.(2025)Team, Bai, Bao, Chen, Chen, Chen, Chen, Chen, Chen, Chen, et~al.]{team2025kimi}
Kimi Team, Yifan Bai, Yiping Bao, Guanduo Chen, Jiahao Chen, Ningxin Chen, Ruijue Chen, Yanru Chen, Yuankun Chen, Yutian Chen, et~al.
\newblock Kimi k2: Open agentic intelligence.
\newblock \emph{arXiv preprint arXiv:2507.20534}, 2025.

\bibitem[Tian et~al.(2025)Tian, Qiao, Liu, Jiang, Li, and Li]{tian2025survey}
Kaiyuan Tian, Linbo Qiao, Baihui Liu, Gongqingjian Jiang, Shanshan Li, and Dongsheng Li.
\newblock A survey on memory-efficient transformer-based model training in ai for science.
\newblock \emph{arXiv preprint arXiv:2501.11847}, 2025.

\bibitem[Tian et~al.(2023)Tian, Zhang, and Zhang]{tian2023recent}
Yingjie Tian, Yuqi Zhang, and Haibin Zhang.
\newblock Recent advances in stochastic gradient descent in deep learning.
\newblock \emph{Mathematics}, 11\penalty0 (3):\penalty0 682, 2023.

\bibitem[Tieleman(2012)]{tieleman2012rmsprop}
Tijmen Tieleman.
\newblock Lecture 6.5-rmsprop: Divide the gradient by a running average of its recent magnitude.
\newblock \emph{COURSERA: Neural Networks for Machine Learning}, 4\penalty0 (2):\penalty0 26, 2012.

\bibitem[Vaswani et~al.(2017)Vaswani, Shazeer, Parmar, Uszkoreit, Jones, Gomez, Kaiser, and Polosukhin]{vaswani2017attention}
Ashish Vaswani, Noam Shazeer, Niki Parmar, Jakob Uszkoreit, Llion Jones, Aidan Gomez, {\L}ukasz Kaiser, and Illia Polosukhin.
\newblock Attention is all you need.
\newblock \emph{Advances in Neural Information Processing Systems}, 30, 2017.

\bibitem[Vyas et~al.(2025)Vyas, Morwani, Zhao, Shapira, Brandfonbrener, Janson, and Kakade]{vyas2024soap}
Nikhil Vyas, Depen Morwani, Rosie Zhao, Itai Shapira, David Brandfonbrener, Lucas Janson, and Sham~M. Kakade.
\newblock {SOAP}: Improving and stabilizing shampoo using {Adam}.
\newblock In \emph{International Conference on Learning Representations}, 2025.
\newblock arXiv:2409.11321.

\bibitem[Wang et~al.(2018)Wang, Singh, Michael, Hill, Levy, and Bowman]{wang-etal-2018-glue}
Alex Wang, Amanpreet Singh, Julian Michael, Felix Hill, Omer Levy, and Samuel Bowman.
\newblock {GLUE}: A multi-task benchmark and analysis platform for natural language understanding.
\newblock In Tal Linzen, Grzegorz Chrupa{\l}a, and Afra Alishahi, editors, \emph{Proceedings of the 2018 {EMNLP} Workshop {B}lackbox{NLP}: Analyzing and Interpreting Neural Networks for {NLP}}, pages 353--355, Brussels, Belgium, November 2018. Association for Computational Linguistics.
\newblock \doi{10.18653/v1/W18-5446}.

\bibitem[Ward(2022)]{ward2022stochastic}
Rachel Ward.
\newblock Stochastic gradient descent: where optimization meets machine learning.
\newblock In \emph{Proc. Int. Cong. Math}, volume~7, pages 5140--5153, 2022.

\bibitem[Wen et~al.(2025)Wen, Hall, Ma, and Liang]{wen2025fantastic}
Kaiyue Wen, David Hall, Tengyu Ma, and Percy Liang.
\newblock Fantastic pretraining optimizers and where to find them.
\newblock \emph{arXiv preprint arXiv:2509.02046}, 2025.

\bibitem[Xie et~al.(2025)Xie, Wang, Reddi, Kumar, and Li]{xie2025structured}
Shuo Xie, Tianhao Wang, Sashank Reddi, Sanjiv Kumar, and Zhiyuan Li.
\newblock Structured preconditioners in adaptive optimization: A unified analysis.
\newblock In \emph{International Conference on Machine Learning}. PMLR, 2025.
\newblock arXiv:2503.10537.

\bibitem[Yen et~al.(2025)Yen, Si, Meng, Yu, Duvvuri, Dhillon, Hsieh, and Kumar]{yen2024lora}
Jui-Nan Yen, Si~Si, Zhao Meng, Felix~X. Yu, Sai~Surya Duvvuri, Inderjit~S. Dhillon, Cho-Jui Hsieh, and Sanjiv Kumar.
\newblock {LoRA} done {RITE}: Robust invariant transformation equilibration for {LoRA} optimization.
\newblock In \emph{International Conference on Learning Representations}, 2025.
\newblock arXiv:2410.20625.

\bibitem[Zellers et~al.(2019)Zellers, Holtzman, Bisk, Farhadi, and Choi]{zellers2019hellaswag}
Rowan Zellers, Ari Holtzman, Yonatan Bisk, Ali Farhadi, and Yejin Choi.
\newblock {H}ella{S}wag: Can a machine really finish your sentence?
\newblock In Anna Korhonen, David Traum, and Llu{\'i}s M{\`a}rquez, editors, \emph{Proceedings of the 57th Annual Meeting of the Association for Computational Linguistics}, pages 4791--4800, Florence, Italy, July 2019. Association for Computational Linguistics.
\newblock \doi{10.18653/v1/P19-1472}.

\bibitem[Zhang et~al.(2025)Zhang, Moniri, Nagwekar, Rahman, Xue, Hassani, and Matni]{zhang2025concurrence}
Thomas T. C.~K. Zhang, Behrad Moniri, Ansh Nagwekar, Faraz Rahman, Anton Xue, Hamed Hassani, and Nikolai Matni.
\newblock On the concurrence of layer-wise preconditioning methods and provable feature learning.
\newblock In \emph{International Conference on Machine Learning}. PMLR, 2025.
\newblock arXiv:2502.01763.

\end{thebibliography}
\end{document}